\def\RPlus{\ensuremath{\mathbin{\rule[.13em]{.66em}{.22em}\hspace{-.44em}\rule[-.08em]{.22em}{.66em}\,}}}
\DeclareRobustCommand\sampleline[1]{%
  \tikz\draw[#1] (0,0) (0,\the\dimexpr\fontdimen22\textfont2\relax)
  -- (2em,\the\dimexpr\fontdimen22\textfont2\relax);%
}
\newtheorem{rem}{Remark}
\declaretheorem[name=Corollary]{cor} 
\declaretheorem[name=Lemma]{lem}
\declaretheorem[name=Proposition]{prop}
\definecolor{NavyBlue}{rgb}{0,0.08,0.50}
\definecolor{Maroon}{RGB}{110,76,75}
\definecolor{ForestGreen}{RGB}{74,103,65}
\definecolor{YellowOrange}{RGB}{255, 174, 66}
\definecolor{mydarkblue}{rgb}{0,0.08,0.45}
\definecolor{darkgreen}{RGB}{34,139,34}
\definecolor{myblue}{RGB}{49,130,189}
\definecolor{myred}{RGB}{251,106,74}
\def\figref#1{figure~\ref{#1}}
\def\Figref#1{Figure~\ref{#1}}
\def\secref#1{section~\ref{#1}}
\def\eqref#1{equation~\ref{#1}}
\def\algref#1{algorithm~\ref{#1}}
\def\1{\bm{1}}
\DeclareMathAlphabet{\mathsfit}{\encodingdefault}{\sfdefault}{m}{sl}
\SetMathAlphabet{\mathsfit}{bold}{\encodingdefault}{\sfdefault}{bx}{n}
\DeclareMathOperator*{\argmax}{arg\,max}
\newcommand{\defeq}{\triangleq}
\newcommand{\tabref}[1]{table~\ref{#1}}
\newcommand{\corref}[1]{Corollary~\ref{#1}}
\newcommand{\propref}[1]{Proposition~\ref{#1}}
\newcommand{\lemref}[1]{Lemma~\ref{#1}}
\newcommand{\algoref}[1]{Algorithm~\ref{#1}}
\newcommand{\lineref}[1]{line~\ref{#1}}
\newcommand{\paren} [1] {\ensuremath{ \left( {#1} \right) }}
\newcommand{\bracket}[1]{\left[#1\right]}
\newcommand{\curlybracket}[1]{\ensuremath{\left\{#1\right\}}}
\newcommand{\reals}{\ensuremath{\mathbb{R}}}
\renewcommand{\Pr}[1]{\ensuremath{\mathbb{P}\left[#1\right] }}
\newcommand{\mutualinfo}[1]{\mathbb{I}\paren{#1}}
\newcommand{\denselist}{\itemsep 0pt\topsep-10pt\partopsep-6pt}
\newcommand{\by}{{\mathbf{y}}}
\newcommand{\algname}{\textsf{Algname}\xspace}
\newcommand{\maxInfo}{\gamma}
\newcommand{\algParam}{\theta}
\definecolor{ultramarine}{RGB}{24,13,191}
\newif\iffinal
    \newcommand{\fix}[1]{}
    \newcommand{\yuxin}[1]{}
    \newcommand{\fengxue}[1]{}
    \newcommand{\yuxinil}[1]{}
    \newcommand{\TBD}[1]{}
    \newcommand{\rebuttal}[1]{#1}
    \newcommand{\fix}[1]{{\color{red} #1}}
    \newcommand{\yuxin}[1]{\todo[fancyline,color=Maroon!40]{YC: #1}\xspace}
    \newcommand{\fengxue}[1]{\todo[fancyline,color=ForestGreen!40]{FZ: #1}\xspace}
    \newcommand{\yuxinil}[1]{{\textcolor{Maroon}{[{\bf YC:} #1]}{}}}
    \newcommand{\rebuttal}[1]{{\color{purple} #1}}
    \newcommand{\TBD}[1]{{[{\bf TD:} \color{Maroon}#1]}}
\renewcommand{\algname}{\textsc{BALLET}\xspace}
\newcommand{\UCB}{\textsc{UCB}\xspace}
\newcommand{\SE}{\textsc{SE}\xspace}
\newcommand{\Linear}{\textsc{Linear}\xspace}
\newcommand{\instance}[0]{\ensuremath{\mathbf{x}}}
\newcommand{\GramMat}[0]{\ensuremath{\mathbf{K}}}
\newcommand{\Selected}[0]{\ensuremath{\mathbf{D}}}
\newcommand{\LatentRepSet}[0]{\ensuremath{\mathbf{Z}}}
\newcommand{\featureExtractor}[0]{q}
\newcommand{\searchSpace}[0]{\ensuremath{\mathbf{X}}}
\newcommand{\roi}[0]{{\ensuremath{\hat{\searchSpace}}}}
\newcommand{\LCB}[0]{{\ensuremath{\textrm{LCB}}}}
\newcommand{\CI}[0]{{\ensuremath{\textrm{CI}}}}
\newcommand{\UCBit}[0]{{\ensuremath{\textrm{UCB}}}}
\newcommand{\acq}[0]{{\ensuremath{\alpha_{\hat{f}}}}}
\newcommand{\acqUCB}[0]{{\ensuremath{\widehat{\textrm{UCB}}}}}
\newcommand{\acqLCB}[0]{{\ensuremath{\widehat{\textrm{LCB}}}}}
\newcommand{\acqUCBt}[0]{{\ensuremath{\widetilde{\textrm{UCB}}}}}
\newcommand{\acqLCBt}[0]{{\ensuremath{\widetilde{\textrm{LCB}}}}}
\newcommand{\acqCI}[0]{{\ensuremath{\widehat{\textrm{CI}} }}}
\newcommand{\discreteSet}[0]{\ensuremath{\Tilde{D}}}
\newcommand{\actionSet}[0]{\ensuremath{\textit{A}}}
\newcommand{\maxInfoBALLET}[0]{\ensuremath{\widehat{\maxInfo}}}
\newcommand{\discreteROI}[0]{\ensuremath{\discreteSet_\roi}}
\newcommand{\hatCI}[0]{\ensuremath{\widehat{\CI}}}
\newcommand{\tildeCI}[0]{\ensuremath{\widetilde{\CI}}}
\newcommand{\globalf}{\ensuremath{f_\text{g}}}
\newcommand{\roif}{\ensuremath{\hat{f}}}
\newcommand{\acqRoici}{{\ensuremath{\alpha_{\hat{f},\textsc{RCI}}}}}
\newcommand{\acqTs}{{\ensuremath{\alpha_{\hat{f},\textsc{TS}}}}}
\newcommand{\roiRangeAt}[1]{\ensuremath{\Delta_{\textsc{roi},#1}}}
\newcommand{\interCI}{\textsc{\algname-ICI}\xspace}
\newcommand{\interUCB}{\textsc{\algname-IUCB}\xspace}
\newcommand{\roiTS}{\textsc{\algname-RTS}\xspace}
\newcommand{\roiCI}{\textsc{\algname-RCI}\xspace}
\newcommand{\DKBO}{\textsc{DKBO-AE}\xspace}
\newcommand{\LAMCTS}{\textsc{LA-MCTS}\xspace}
\newcommand{\turbo}{\textsc{TuRBO}\xspace}
\newcommand{\funcDot}[0]{\ensuremath{\mathord{\cdotp}}}
\title{Learning Regions of Interest for Bayesian Optimization with Adaptive Level-Set Estimation}
\author{\name Fengxue Zhang \email zhangfx@uchicago.edu \\
       \addr University of Chicago\\
       \AND
       \name Jialin Song \email jialin.utpt@gmail.com \\
       \addr Nvidia\\
       \AND
       \name James Bowden \email jbowden@caltech.edu \\
       \addr California Institute of Technology\\
       \AND
       \name Alexander Ladd \email chenyuxin@uchicago.edu \\
       \addr Lawrence Livermore National Laboratory\\
       \AND
       \name Yisong Yue \email yyue@caltech.edu \\
       \addr California Institute of Technology\\
       \AND
       \name Thomas A. Desautels \email desautels2@llnl.gov \\
       \addr Lawrence Livermore National Laboratory\\
       \AND
       \name Yuxin Chen \email chenyuxin@uchicago.edu \\
       \addr University of Chicago\\
       }
\begin{document}

\maketitle

\begin{abstract}

We study Bayesian optimization (BO) in high-dimensional and non-stationary scenarios.  Existing algorithms for such scenarios typically require extensive hyperparameter tuning, which limits their practical effectiveness.
We propose a framework, called \algname, which adaptively filters for a high-confidence region of interest (ROI) as a superlevel-set of a nonparametric probabilistic model such as a Gaussian process (GP).  Our approach is easy to tune, and is able to focus on local region of the optimization space that can be tackled by existing BO methods.  The key idea is to use two probabilistic models: a coarse GP to identify the ROI, and a localized GP for optimization within the ROI. We show theoretically that \algname can efficiently shrink the search space, and can exhibit a tighter regret bound than standard BO without ROI filtering.  We demonstrate empirically the effectiveness of \algname on both synthetic and real-world optimization tasks.

\end{abstract}
\section{Introduction}
Bayesian optimization (BO) is a popular statistic-model-based sequential optimization method in various fields of science and engineering, including scientific experimental design \citep{yang2019machine}, robotics planning \citep{Berkenkamp2016SafeOpt,sui2018stagewise}, self-tuning systems \citep{zhang2022learning} and hyperparameter optimization \citep{snoek2012practical}. These applications often involve optimizing a black-box function that is expensive to evaluate, where the statistics-guided efficient optimization algorithm is desired. The common practice in BO is to employ Gaussian processes (GPs) \citep{rasmussen:williams:2006} as a statistic surrogate model for the unknown objective function due to its  mathematical simplicity 
as well as the promising capability in terms of learning and inference, which allows for defining effective acquisition functions.

Despite strong empirical and theoretical results under certain assumptions (e.g., smoothness) \citep{srinivas2009gaussian,wang2017max,wang2016optimization}, BO has struggled in many real-world settings due to the \emph{high-dimensional}, \emph{large-scale}, and \emph{heterogeneous} nature of optimization tasks. Besides the well-known curse of dimensionality \citep{bengio2005curse}, the heterogeneity and scarcity of training data in real-world tasks make it challenging to fit a single (global) GP for data acquisition \citep{eriksson2019scalable}. Meanwhile, purely relying on local characteristics has been proven to be ineffective for global optimization, due to the ignorance of the correlations on observations that are normally captured by the GP.
The trade-off between exploiting data locality and exploring uncertainty at a global scale emerges as a critical problem in real-world BO settings, especially when the global smoothness assumption no longer holds.

\begin{figure*}[t]
  \centering
  \begin{subfigure}[b]{0.25\linewidth}
    \includegraphics[trim=.5cm -.6cm 1.5cm 0cm, width=\textwidth]{./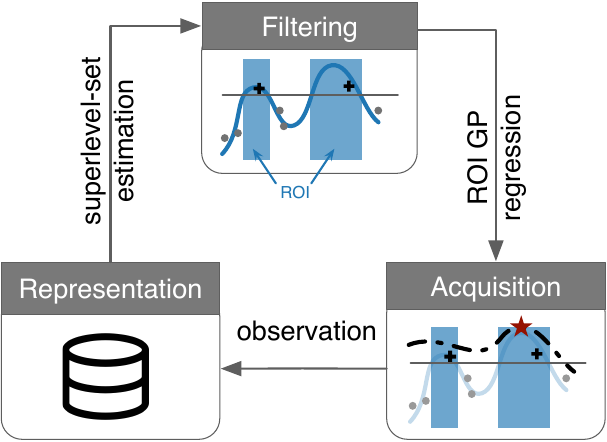}
    \caption{The \algname framework \label{fig:illustration}}
  \end{subfigure}%
  \qquad
  \begin{subfigure}[b]{0.66\linewidth}
    \centering
    \includegraphics[trim=0cm 2.5cm 3cm 0cm, width=\textwidth]{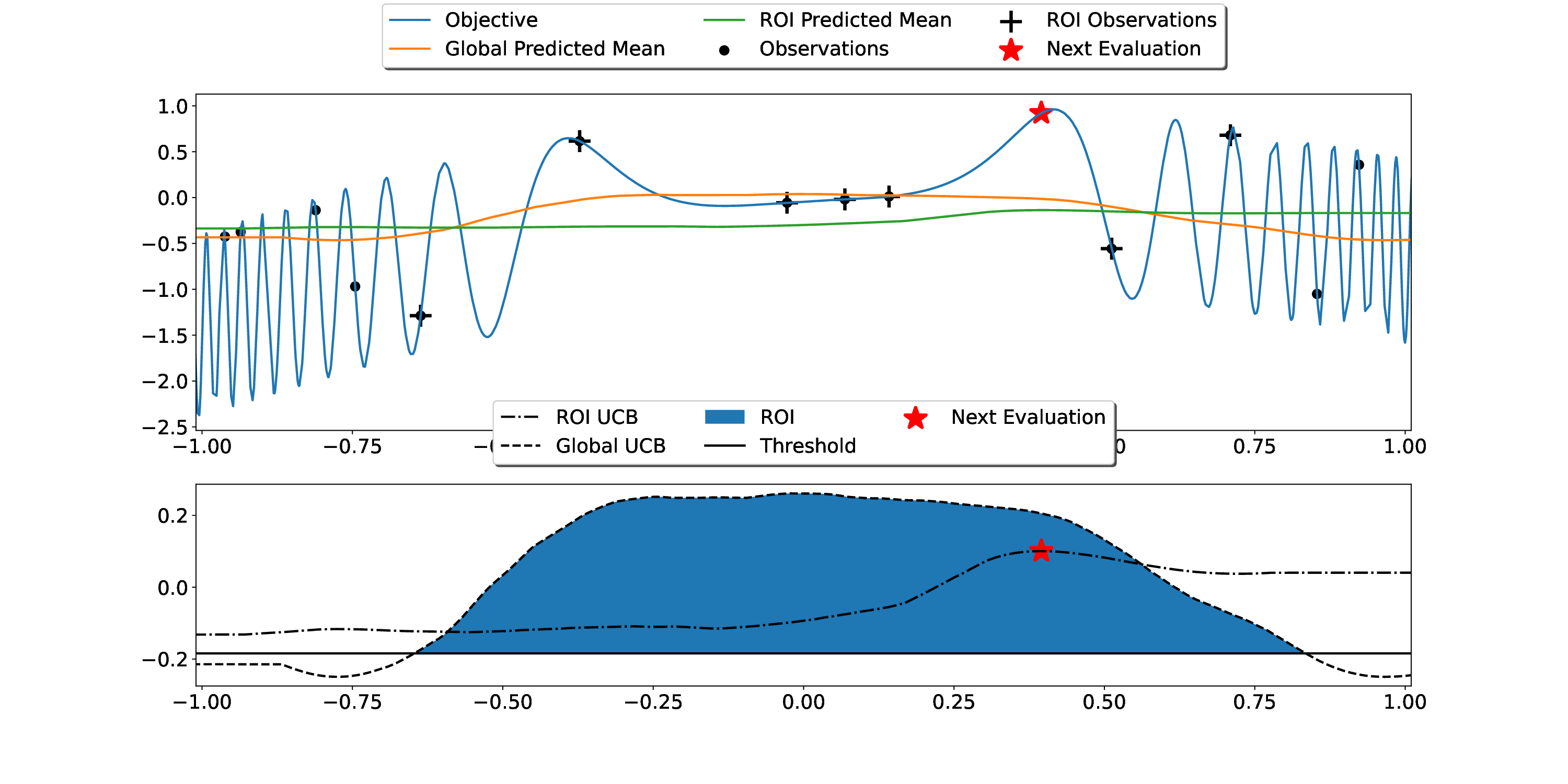}
    \caption{Illustration on a synthetic 1D function. \label{fig:1d_illustration}}
  \end{subfigure}
  \caption{(a) Schematic of the algorithmic framework of \algname. It first identifies the regions of interest by estimating the superlevel-set via a global GP. Then it trains a second GP on the superlevel-set ROIs and uses this GP to acquire the next data point to evaluate (marked by ``\textcolor{Maroon}{$\bigstar$}''). 
    (b) illustrates a single iteration of \algname.
    The upper figure shows the underlying \textcolor{NavyBlue}{objective function}, together with the posterior mean for both the \textcolor{YellowOrange}{global GP} and \textcolor{ForestGreen}{ROI GP}. Training examples (i.e. observations) for the \textcolor{YellowOrange}{global GP} and the \textcolor{ForestGreen}{ROI GP} are marked by ``$\bullet$'' and ``$\RPlus$'' accordingly. The lower figure shows the filtering mechanism using the UCB (\sampleline{dashed}) of the \textcolor{YellowOrange}{global GP} and the threshold determined by the maximal LCB (\sampleline{}) of the \textcolor{ForestGreen}{ROI GP} (\secref{sec:ballet}, \lineref{alg:ln:filtering} of \algref{alg:main}). By learning the \textcolor{ForestGreen}{ROI GP} on the filtered area, \algname is guided by its acquisition function (\sampleline{dash pattern=on .7em off .2em on .05em off .2em}) to the \colorbox{NavyBlue!20}{ROIs}  rather than the ``bad'' regions which, with high confidence, is sub-optimal. 
    The next data point is then chosen from the ROIs (\textcolor{Maroon}{$\bigstar$}).
    Details for this 1D synthetic experiment are provided in \secref{sec:exp}.} \label{fig:overview}
\end{figure*}

Historically, various partitioning-based BO methods have been proposed to tackle this challenge. These methods, often based on certain clustering heuristics, learn the \emph{regions of interest} (ROI) to better reflect the data locality. A common issue for existing heuristics is the added layer of complexity for model fine-tuning, which involves optimizing extra hyperparameters such as the number of ROIs \citep{eriksson2019scalable}, maximum leaf size in the tree-structured partitioning methods, and methods to generalize the partition learned on the accumulated observations to the whole search space \citep{8187198}.

We propose a novel nonparametric approach for partitioning-based BO that demonstrates strong empirical performance in real-world tasks, while having few hyperparameters to maintain. The proposed algorithm is inspired by the \emph{level-set estimation} (LSE) problem, where a level-set corresponds to a set of points for which the black-box objective function takes value above (or below) some \emph{given} threshold. Given a threshold, \citet{10.5555/2540128.2540322} show that one can leverage the point-wise confidence interval to actively identify the level-set with a theoretical guarantee. In the context of Bayesian optimization, the threshold could reduce to the lower confidence bound of the global optima.

\paragraph{Our contribution} Following the above insight, we propose the novel Bayesian optimization framework with adaptive estimation of regions of interest. As illustrated in \Figref{fig:overview}, The algorithm partitions the search space based on confidence intervals and identifies the superlevel-set as the ROIs of high confidence contain the global optimum. We propose a novel acquisition function that relies on both the global model and the ROI model to capture the locality while not sacrificing global knowledge through optimization.
We further provide rigorous theoretical analyses showing that the proposed acquisition function can, in a principled way, exhibit an improved regret bound compared to its canonical BO counterpart without the filtering component. We demonstrate the effectiveness of the proposed framework with an empirical study on several synthetic and real-world optimization tasks.

\section{Related Work}
\paragraph{High-dimensional Bayesian optimization}

BO often uses Gaussian processes as a (mathematically) simple yet powerful tool to parametrize the black-box function. 
However, GPs are difficult to fit and \rebuttal{be applied in optimization} in the high-dimensional setting due to the curse of dimensionality; thus classical BO algorithms need to be modified for high-dimensional function classes \citep{djolonga2013high}. 

A class of methods leverages additional structures, such as additive GPs, to mitigate the challenge of training a single global GP. For instance, LineBO restricts its search space to a one-dimensional subspace with reduced sample complexity at each step \citep{kirschner2019adaptive}. GP-ThreDS relies on Hölder condition on the unknown objective to prune the search space, avoids the discretization cost exponential to the dimensionality of the search space, and speeds up the optimization \citep{salgia2021domain}. In contrast, we aim at the applications where no Hölder smoothness is guaranteed, while valid discrete candidates in the search space are given.

\rebuttal{Another line of work assumes that despite the high dimensionality of the original input, the effective dimensionality is smaller. Therefore, it is feasible to embed the high-dimensional input space into a lower-dimensional subspace using techniques such as random projection and variable selection, while preserving the desired properties for optimization \citep{song2022monte, wang2016bayesian, Letham2020Re, HeSBO19, papenmeier2022increasing}. Additionally, \citet{mcintire2016sparse, moss2023inducing} propose that a reduced set of points can effectively represent the original high-dimensional space without significantly sacrificing uncertainty quantification. They introduce sparse GP as an efficient surrogate for high-throughput Bayesian optimization. Although these works share the spirit of conducting Bayesian optimization on a reduced complexity set compared to the original high-dimensional input space, they can be integrated into any GP-based optimization framework. Hence we do not make comparison with them in experiments.}

\fengxue{Actually the original content could be well summarized by HDBO as a paragraph title. Hence, I keep it while differentiating the works most related to ours from other HDBO methods. and refer to the survey offered by the reviewer for a more comprehensive discussion over HDBO.}
Our proposed method relates most closely to methods with input space partitions \citep{wabersich2016advancing, eriksson2019scalable, wang2020learning, pmlr-v133-sazanovich21a}. Notably, TurBO \citep{eriksson2019scalable} maintains a collection of local GPs and allocates queries with a multi-armed bandit procedure. LA-MCTS \citep{wang2020learning} learns a partition of the input space and uses Monte Carlo tree search (MCTS) to decide a subspace to apply BO. Compared with the proposed \algname, these partitioning methods rely on heuristics and add extra complexity to the optimization task with hyperparameters of these heuristics, e.g., TuRBO relies on the number of trust regions and LA-MCTS relies on leaf size, a hyperparameter in UCB for the subspace selection and one for the partitioning algorithm. \footnote{\rebuttal{Due to the diverse range of interests and fields of work in HDBO, we recommend referring to the recent survey conducted by \cite{10.1145/3545611} for a more extensive discussion on High-dimensional Bayesian Optimization with Gaussian Processes.}}

\paragraph{Partition-based Bayesian active learning and optimization} 
Partition-based methods are common in BO with safety constraints \citep{pmlr-v133-sazanovich21a, sui2018stagewise, makarova2021risk}. These methods 
use LCB from GPs to partition the input space into safe and unsafe subspaces. Subsequent optimizaton queries are restricted to the safe subspaces only. Another related work is the level set estimation (LSE) method by \citet{10.5555/2540128.2540322}, where the authors use both UCB and LCB to narrow down regions where a particular function value is likely to exist. A unified framework \rebuttal{TRUVAR} for BO and LSE task \citep{bogunovic2016truncated} proposes a similar filtering method but does not learn a local surrogate model. Instead, the filtering is used to constrain its acquisition function.  Our method {inherits} the spirit of LSE to leverage the confidence interval to adaptively partition the search space. 

\paragraph{Partition-based optimization methods}
More broadly speaking, partitioning the input space is a general strategy employed by several optimization methods \citep{munos2011optimistic, 8187198, 7352306, JMLR:v22:18-220, kawaguchi2016global}. Simultaneous optimistic optimization (SOO) algorithm \citep{munos2011optimistic, 8187198}, which is a non-Bayesian approach that intelligently partitions the space based on observed experiments to effectively balance exploration and exploitation of the objective. A modification of SOO, named Locally Oriented Global Optimization (LOGO) \citep{kawaguchi2016global}, achieves both fast convergence in practice and a finite-time error bound in theory. However, 
these non-Bayesian approaches have seen more degraded empirical performance on high-dimensional functions than their Bayesian counterparts \citep{JMLR:v22:18-220}. 

\section{Bayesian Optimization with Adaptive Level-Set Estimation}\label{sec:method}
We consider the standard BO setting for sequentially optimizing a function $f: \searchSpace \rightarrow \reals$, where $\searchSpace \subseteq \reals^d$ is the search space. 
At iteration $t$, we maintain a Gaussian process 
as the surrogate model,  picks a point $\instance_t\in\searchSpace$ by maximizing the acquisition function $\alpha: \searchSpace \rightarrow \reals$, 
and observe the function value perturbed by additive noise: 
$y_t = f(\instance_t) + \epsilon_t$ with $\epsilon_t \sim \mathcal{N}(0, \sigma^2)$ 
being i.i.d. Gaussian noise. The goal is to maximize the sum of rewards 
$\sum^T_{t=1}f(\instance_t)$ over $T$ iterations, or equivalently, 
to minimize the \emph{cumulative regret} $R_T \triangleq \sum_{t=1}^T r_t$, 
where $r_t \triangleq \max\limits_{\instance\in \searchSpace}f(\instance) - f(\instance_t)$ 
denotes the \textit{instantaneous regret}. Another common performance metric in BO is the \textit{simple regret} $r^*_T \triangleq \max\limits_{\instance\in \searchSpace}f(\instance) - \max\limits_{t\leq T}f(\instance_t)$. 

\subsection{The \algname framework}\label{sec:ballet}
\paragraph{Global modeling and representation} Existing works use heuristics to partition the historical observations $\Selected_t = \{X_t, Y_t\}$ first and then generalize it to the whole search space $\searchSpace$ \citep{wang2020learning, eriksson2019scalable}. Here $Y_t =\{y_1,...,y_t\}$ and $X_t =\{\instance_1,...,\instance_t\}$. 
The heuristics could be sensitive to additional hyperparameters of the partitioning model (e.g., number of partitions, etc), which in turn affect the optimization performance.

Instead, we propose to learn a partitioning on $\searchSpace$ with a global estimation of the underlying blackbox function $\globalf \defeq f$, which is modeled by a Gaussian process $\mathcal{GP}_{\globalf}(m_{\globalf}(\instance), k_{\globalf}(\instance, \instance'))$ trained on the historical observations. $\mathcal{GP}_{\globalf}$ is parameterized by $\algParam_{\globalf}$, where $m_{\globalf}(\instance)$ is the mean function and $k_{\globalf}(\instance, \instance')$ is the covariance function.

In this work, we resort to \emph{Deep Kernel Learning} (DKL) \citep{pmlr-v51-wilson16} as a scalable tool to train the GPs\footnote{We propose a kernel-agnostic framework and in implementation, we apply the efficient deep kernel for large-scale optimization. In deep kernel learning, which is shown to bear strong empirical performance in regression and optimization (e.g.\citep{pmlr-v51-wilson16, wistuba2021few}), 
the learning cost is $\mathcal{O}(n)$ for $n$ training points, and the prediction cost is
$\mathcal{O}(1)$ per test point and is more efficient than exact GP in terms of computational cost.}.

The algorithm learns a latent space mapping $\featureExtractor: \searchSpace\rightarrow \LatentRepSet$ on a neural network to convert the input space $\searchSpace$ to the latent space $\LatentRepSet$, and constructs an objective mapping $h: \LatentRepSet \rightarrow \reals$ such that $f(\instance)\approx h(\featureExtractor(\instance)),\ {\forall \instance\in \searchSpace}$. 
The neural network $\featureExtractor$ and the base kernel $k$ together are regarded as a \emph{deep kernel}, 
denoted by $k_{\globalf}(\instance, \instance') = k\paren{\featureExtractor(\instance), \featureExtractor(\instance')}$ \citep{pmlr-v51-wilson16}. The deep kernel is trained by maximizing the negative log-likelihood (NLL) $-\log(\Pr{\mathbf{y}_t \mid X_t,\algParam_{\globalf,t}}) = -\frac{1}{2}\mathbf{y}_t^\top(\GramMat_{\globalf,t}+\sigma^2I)^{-1}\by_t -\frac{1}{2}\log|(\GramMat_{\globalf,t}+\sigma^2I)| -\frac{t}{2}\log(t)$ which is the learning objective for the kernel \citep{rasmussen:williams:2006}. \rebuttal{Note that DKL relies on KISS-GP \cite{wilson2015kernel} which generalize inducing point methods with kernel interpolation for efficient inference and is related to the sparse GP methods \cite{mcintire2016sparse, moss2023inducing}}. In addition to DKL, we use the unlabeled dataset sampled from $\searchSpace$ to pre-train an Auto-Encoder and use the parameters of its encoder to initialize the neural network $\featureExtractor$ following the protocol described by \citet{ferreira2020using}.

At iteration $t$, given the selected points $\Selected_t$, the posterior over $\globalf$ also takes the form of a GP, with mean $\mu_{\globalf,t}(\instance) = k_{\globalf,t}(\instance)^\top(\GramMat_{\globalf,t}+\sigma^2I)^{-1}\by_t$ and covariance $k_{\globalf,t}(\instance, \instance') = k_{\globalf}(\instance,\instance')-k_{\globalf,t}(\instance)^\top(\GramMat_{\globalf,t}+\sigma^2I)^{-1}k_{\globalf,t}(\instance')$, where $k_{\globalf,t}(\instance) \triangleq \bracket{k_{\globalf}(\instance_1, \instance),\dots, k_{\globalf}(\instance_t, \instance)}^\top$ and $\GramMat_{\globalf,t} \triangleq \bracket{k_{\globalf}(\instance, \instance')}_{\instance,\instance' \in \Selected_t}$ is the positive definite kernel matrix \citep{rasmussen:williams:2006}.

\begin{algorithm*}
\caption{\textbf{\underline{B}}ayesian Optimization with \textbf{\underline{A}}daptive \textbf{\underline{L}}eve\textbf{\underline{l}}-Set \textbf{\underline{E}}s\textbf{\underline{t}}imation (\algname)}
\label{alg:main}

    \begin{algorithmic}[1]
        \STATE {\bf Input}:Search space $\searchSpace$, initial observation $\Selected_0$, horizon $T$;
        \FOR{$t = 1\ to\ T$}
            \STATE Fit the global Gaussian process $\mathcal{GP}_{\globalf,t}$: $\algParam_{\globalf,t} \leftarrow \argmax_{\algParam_{\globalf}}-\log \Pr{Y_t\mid X_{t-1},\algParam_{\globalf}}$ 
             
            \STATE 
            Identify ROIs via superlevel-set estimation
            $\roi_t \leftarrow \{\instance\in \searchSpace \mid    \UCBit_{\globalf, t}(\instance) \geq  \LCB_{\globalf, t, \max}\}$ \label{alg:ln:filtering}
            
            \STATE Partition the historical observation: $\hat{\Selected}_t \leftarrow \{(\instance,y)\in \Selected_t \mid \instance \in {\roi_t}\}$. 
             
            \STATE  Fit the ROI Gaussian process  $\mathcal{GP}_{\roif,t}$: $\algParam_{\roif,t} \leftarrow \argmax_{\algParam_{\roif}}-\log \Pr{Y_t \cap \hat{\Selected_t}|{X_{t} \cap \hat{\Selected}_t},\algParam_{\roif}}$ 
          
            \STATE Optimize the {superlevel-set} acquisition function: $\instance_{t+1} \leftarrow \argmax\limits_{\instance\in \roi}{\acq(\instance)}$ (e.g., as defined in \eqref{eq:acqCI},~\ref{eq:acqROI-ts} or \ref{eq:acqROI-ci})
            
            \STATE $\Selected_{t+1} \leftarrow \Selected_{t} \cup \{(\instance_{t+1}, y_{t+1})\}$
        \ENDFOR
        \STATE {\bf Output}: $\max\limits_{t}{y_t}$
    \end{algorithmic}
\end{algorithm*}

\paragraph{Superlevel-set estimation and filtering} The global $\mathcal{GP}_{\globalf}$ induces a filter on $\searchSpace$ to locate the region of interest $\roi$. It is desired for $\roi$ that with high probability, the optimum $\instance^* \in \argmax_{\instance\in\searchSpace} f(\instance)$ is contained in $\roi$, while $\Vert\roi\Vert \ll \Vert\searchSpace\Vert$. 

Specifically, we leverage the confidence interval of the global Gaussian process $\mathcal{GP}_{\globalf}$ to define the upper confidence bound 
$  \UCBit_{\globalf, t}(\instance) \triangleq \mu_{\globalf,t-1}(\instance) + \beta^{1/2}_{t}\sigma_{\globalf,t-1}(\instance)$ 
and lower confidence bound $ \LCB_{\globalf, t}(\instance) \triangleq\mu_{\globalf,t-1}(\instance) - \beta^{1/2}_{t}\sigma_{\globalf,t-1}(\instance)$, where $\sigma_{\globalf,t-1}(\instance) = k_{\globalf,t-1}(\instance,\instance)^{1/2}$ and $\beta$ acts as an scaling factor. Then the maximum of the global lower confidence bound $ \LCB_{\globalf, t, \max} \triangleq \max_{\instance\in\searchSpace}  \LCB_{\globalf, t}(\instance)$ can be used as the threshold, and we attain the superlevel-set
\begin{align}
    \roi_t \triangleq \curlybracket{ \instance \in \searchSpace \mid \UCBit_{\globalf, t}(\instance) \geq  \LCB_{\globalf, t, \max}} \label{eq:roi}
\end{align} 
as the region(s) of interest. The historical observation on this subset is denoted as
\begin{align}
    \hat{\Selected}_t \triangleq \curlybracket{(\instance,y)\in \Selected \mid \instance \in \roi_t}. \label{eq:selected}
\end{align}

\begin{rem}
\algname is not assuming that the resulting $\roi$ is composed of one single cluster. 
\algname learns a single GP over $\roi_t$ and optimizes on all these localities at the same time. Intuitively it aims at conducting (local) BO on the top tier (which could consist of multiple regions) of the unknown function. This mechanism avoids being overconfident to identify only one region of interest or the need to manually specify the number of clusters beforehand. Here we use the term ``superlevel-set'' 
to differentiate from the methods conducting local BO.
\end{rem}

\subsection{\interCI}\label{sec:interci}

The goal of the filtering step in \algname is to shrink the search space $\roi_t$ while ensuring that the optimum is contained in the ROIs with high probability.
By definition of $\roi_{t}$ (\eqref{eq:roi}), we note that at iteration $t$ the size of the search space $\Vert\roi_{t}\Vert$ is directly affected by $\UCBit_{\globalf, t}(\instance)$. \footnote{\rebuttal{Given any finite discretization $\discreteSet$ of $\searchSpace$, we refer the size of the search space as the cardinality  $\vert \discreteSet \vert$.}} We thus consider the width of the range of $\UCBit_{\globalf, t}$ over $\instance \in \roi_{t}$, formally defined as

\rebuttal{
\begin{align}
    \roiRangeAt{t}(\instance_t) \triangleq \max_{\instance \in \roi} \UCBit_{\globalf, t}(\instance) - \LCB_{\globalf, t, \max} \label{eq:roirange}
\end{align}
}
as a surrogate objective to minimize. 

\subsubsection{
Acquisition function} 
Evaluating \eqref{eq:roirange} for a new data point $\instance$ requires 1-step look-ahead (i.e., computing the expected $\roiRangeAt{t+1}$ should $\instance$ be acquired at $t$), which could be expensive. Instead, we consider the \emph{point-wise confidence interval} of the ROI Gaussian process $\mathcal{GP}_{\roif}$ trained on $\hat{\Selected}$, denoted by 
\begin{align}
    \CI_t(\instance) \triangleq \bracket{\LCB_{\roif,t}(\instance),\UCBit_{\roif, t}(\instance)}, \label{eq:roiciatt}
\end{align}
and simply use the width of $|\CI_t(\instance)|$ as an efficiently proxy for evaluating $\instance$. 

\paragraph{Mitigating the loss of information of $\mathcal{GP}_{\roif}$} At each iteration, 
\algname conducts superlevel-set estimation 
and then runs BO on $\roi$ using the ROI Gaussian process $\mathcal{GP}_{\roif}$ trained on $\hat{\Selected}$. 
Note that $\mathcal{GP}_{\roif}$ could better capture the locality at the cost of \emph{losing partial historical observations} due to the filtering as $\hat{\Selected}\subseteq \Selected$. The missing historical observations could result in additional undesired uncertainty in $\mathcal{GP}_{\roif}$ compared to the global GP $\mathcal{GP}_{\globalf}$.
To avoid such information loss while taking the advantage of the identified ROIs,

we propose to tighten the confidence interval (\eqref{eq:roiciatt}) 
by taking the intersection of the confidence intervals 
from all ROI GPs trained from each of the previous iterations $\mathcal{GP}_{\roif, i\leq t}$ and the corresponding global GPs, $\mathcal{GP}_{\globalf, i\leq t}$. In this way, the acting superlevel-set confidence interval would be $\hatCI_{t}(\instance) \triangleq \bracket{\acqLCB_{t}(\instance), \widehat{  \UCBit}_{t}(\instance)}$ , where 

\begin{align}
\begin{cases}
    \acqLCB_{t}(\instance) \triangleq \max_{i\leq t, f\in\{\roif, \globalf\}}{ \LCB_{f, i}(\instance)} 
    \\
    \acqUCB_{t}(\instance) \triangleq \min_{i\leq t, f\in\{\roif, \globalf\}}{  \UCBit_{f, i}(\instance)} 
\end{cases} \label{eq:acqLCB+UCB}
\end{align}
It is possible that the intersection in \eqref{eq:acqLCB+UCB} results in an empty CI due to the dynamics brought by the learned kernels. 
In practice, instead of taking the intersections of all the historical GPs, we could mitigate the problem by only taking the intersection of the CIs at step $t$ to get $\tildeCI_{t}(\instance) = [\acqLCBt_{t}(\instance), \acqUCBt_{t}(\instance)]$. Here 

\begin{align}
\begin{cases}
    \acqLCBt_{t}(\instance) \triangleq \max_{f\in\{\roif, \globalf\}}{ \LCB_{f, t}(\instance)} 
    \\
    \acqUCBt_{t}(\instance) \triangleq \min_{f\in\{\roif, \globalf\}}{  \UCBit_{f, t}(\instance)} 
\end{cases} \label{eq:acqLCBt+UCBt}
\end{align}

Note when $\LCB_{\roif,t} \leq \LCB_{\globalf,t}$ and $\LCB$ is monotonically increasing wrt $t$
, it holds that $\LCB_{\globalf, t, \max}=\max_{\instance \in \hat{\searchSpace}}\acqLCB_{ t}(\instance)=\max_{\instance \in \hat{\searchSpace}}\acqLCBt_{ t}(\instance)$.
 
\paragraph{The \interCI acquisition function} We propose to apply the \underline{i}ntersection of the \underline{c}onfidence \underline{i}ntervals as an acquisition function for \algname (\interCI), namely 
\begin{equation}\label{eq:acqCI}
    \acq(\instance) \triangleq \acqUCBt_{t}(\instance) - \acqLCBt_{t}(\instance)  
\end{equation}

Our algorithm is presented in \algoref{alg:main}. In the following subsection, we rigorously justify the use of \eqref{eq:acqCI} as our acquisition function, and prove that the cost on the optimization performance using the relaxation from \eqref{eq:acqLCB+UCB} to \eqref{eq:acqLCBt+UCBt} could be bounded under certain conditions.

\subsubsection{Theoretical analysis}
By abuse of notation, we let the maximum confidence interval on a certain set denoted by 

$$\hatCI_{t,\max}(\funcDot) =  \bracket{\max_{\instance \in \funcDot}\acqLCB_{ t}(\instance), \max_{\instance \in \funcDot}\acqUCB_{ t}(\instance)}$$ 

$$\tildeCI_{t,\max}(\funcDot) = \bracket{\max_{\instance \in \funcDot}\acqLCBt_{t}(\instance),
\max_{\instance \in \funcDot}\acqUCBt_{t}(\instance)}$$

The following lemma shows that the interval
$\hatCI_{t,\max}(\hat{\searchSpace})$ is a high confidence interval for $f^* = \max\limits_{\instance\in \searchSpace}f(\instance)$ given a good discretization of the search space.

\begin{lem}\label{lem:CI}
\todo{avoid validity concern over the assumptions}
Assume $\forall t < T, \instance\in\searchSpace$, $f(\instance)$ is a sample from global $\mathcal{GP}_{\globalf, t}$. \footnote{\rebuttal{Here rigorously $\forall t < T$, $\mathcal{GP}_{\globalf, t}$ should share the same prior with each other, while we periodically retrain the model similar to the practice of \cite{tripp2020sample} and we reflect it with the subscript $t$. }}
For any $\delta \in (0,1)$ and any finite discretization $\discreteSet$ of $\searchSpace$ 
containing the optimum $\instance^* = \argmax_{\instance\in \searchSpace}f(\instance)$
, with $\beta_t=2\log(2\vert \discreteSet \vert \pi_t/ \delta)$ where $\sum_{t\geq 1}\pi_t^{-1} = 1$, 
$\Pr{f^* \in \hatCI_{t,\max}(\discreteSet)} \geq 1-\delta$.

\end{lem}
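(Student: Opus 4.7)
The plan is to adapt the standard GP-UCB confidence-bound argument of Srinivas et al.\ (their Lemma 5.1) so that it simultaneously covers the global surrogate $\mathcal{GP}_{\globalf,i}$ and the ROI surrogate $\mathcal{GP}_{\roif,i}$ at every past iteration $i\leq t$. Under the hypothesis that $f$ is a sample from $\mathcal{GP}_{\globalf,i}$, conditional on the training data used by either GP at iteration $i$ the posterior marginal of $f(\instance)$ is Gaussian with mean $\mu_{F,i}(\instance)$ and variance $\sigma_{F,i}^{2}(\instance)$ for $F\in\{\globalf,\roif\}$, so the standardized residual is $\mathcal{N}(0,1)$ and the one-dimensional tail bound yields $\Pr{|f(\instance)-\mu_{F,i}(\instance)|>\beta_i^{1/2}\sigma_{F,i}(\instance)}\leq \exp(-\beta_i/2)$.

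Next, I would define the good event $E$ on which $\LCB_{F,i}(\instance)\leq f(\instance)\leq \UCBit_{F,i}(\instance)$ holds simultaneously for every $\instance\in\discreteSet$, every $i\leq t$, and both $F\in\{\globalf,\roif\}$. Substituting $\beta_i=2\log(2\lvert\discreteSet\rvert\pi_i/\delta)$ gives per-event failure probability at most $\delta/(2\lvert\discreteSet\rvert\pi_i)$; a union bound over the $\lvert\discreteSet\rvert$ points, the two surrogate types (factor $2$), and the iterations (using $\sum_i \pi_i^{-1}=1$) gives total failure mass at most $\delta$, hence $\Pr{E}\geq 1-\delta$.

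On $E$, the conclusion reduces to an algebraic manipulation of the $\min/\max$ definitions of $\acqLCB$ and $\acqUCB$. Since $\instance^{*}\in\discreteSet$ and every $\UCBit_{F,i}(\instance^{*})\geq f(\instance^{*})=f^{*}$, the pointwise intersection $\acqUCB_{t}(\instance^{*})=\min_{i\leq t,F}\UCBit_{F,i}(\instance^{*})$ is at least $f^{*}$, so $\max_{\instance\in\discreteSet}\acqUCB_{t}(\instance)\geq f^{*}$. For the other side, for any $\instance\in\discreteSet$ each $\LCB_{F,i}(\instance)\leq f(\instance)$ on $E$, hence $\acqLCB_{t}(\instance)=\max_{i\leq t,F}\LCB_{F,i}(\instance)\leq f(\instance)$; taking a maximum over $\discreteSet$ yields $\max_{\instance\in\discreteSet}\acqLCB_{t}(\instance)\leq \max_{\instance\in\discreteSet}f(\instance)=f^{*}$. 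Combining the two inclusions gives $f^{*}\in\hatCI_{t,\max}(\discreteSet)$.

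The main technical obstacle is justifying the Gaussian marginal for $f(\instance)$ conditional on the adaptively selected training set of $\mathcal{GP}_{\roif,i}$, which itself is a function of all previously fit GPs (and hence of the random history). The cleanest fix is to condition on the entire filtration generated by the sampling decisions and the learned kernel parameters: because the subset used to train $\mathcal{GP}_{\roif,i}$ is then measurable, the conditional posterior marginal is still Gaussian with the stated mean and variance, and the per-step tail bound applies verbatim. The footnote about periodic retraining with a shared prior is precisely the hook that makes this measurability argument go through; everything remaining (union bound, ordering of $\min$/$\max$) is routine.
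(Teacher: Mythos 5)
Your proposal is correct and follows essentially the same route as the paper: the Srinivas et al.\ Lemma~5.1 pointwise Gaussian tail bound, a union bound over the discretization, the two surrogates $\{\globalf,\roif\}$, and the iterations via $\sum_t \pi_t^{-1}=1$ (which is exactly what the factor $2$ in $\beta_t$ pays for), followed by the elementary $\min/\max$ manipulations showing $\max_{\instance\in\discreteSet}\acqLCB_t(\instance)\le f^*\le\max_{\instance\in\discreteSet}\acqUCB_t(\instance)$. Your explicit treatment of the measurability of the adaptively chosen ROI training set is a point of care the paper elides (it simply assumes $f$ is a sample from both GPs, cf.\ its Remark), but it does not change the argument.
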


A proper choice of $\pi_t$ satisfying Lemma 1 is $\pi_t = \frac{\pi^2t^2}{6}$. The following corollary shows that with high probability, the global optimum is contained in the interval.

\begin{rem}
Rigorously, $\forall t < T, \instance\in\hat{\searchSpace}$, the marginalized $\mathcal{GP}_{\roif, t}$ and  $\mathcal{GP}_{\globalf, t}$ shall be the same. Therefore, $\forall t < T, \instance\in\hat{\searchSpace}$, $f(\instance)$ is a sample from $\mathcal{GP}_{\roif, t}$ as well. However, in practice, it is challenging to specify the ideal prior. We introduce $\mathcal{GP}_{\roif, t}$ into the analysis to reflect the benefits of learning the hyperparameters for each GP separately in real-world scenarios. 
\end{rem}

\begin{cor}\label{cor:CI}
With the same conditions as in \lemref{lem:CI}, $\Pr{\instance^* \in \roi_t} \geq 1-\delta, \forall t\geq 1$.
\end{cor}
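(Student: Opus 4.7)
The plan is to reduce the corollary to the concentration event that underlies \lemref{lem:CI} and then chain two inequalities evaluated at the optimum. First I would invoke the same high-probability construction used in \lemref{lem:CI}: with the stated choice $\beta_t = 2\log(2\vert\discreteSet\vert \pi_t/\delta)$, standard Gaussian process tail bounds combined with a union bound across $\instance \in \discreteSet$ and $t \geq 1$ (using the weights $\pi_t^{-1}$) give the pointwise sandwich
\[
\LCB_{\globalf, t}(\instance) \leq f(\instance) \leq \UCBit_{\globalf, t}(\instance)
\qquad \text{for all } \instance \in \discreteSet,\ t \geq 1
\]
on a single event of probability at least $1-\delta$. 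This is precisely the event that makes $f^* \in \hatCI_{t,\max}(\discreteSet)$ hold simultaneously for all $t$, since by construction $\acqUCB_t \leq \UCBit_{\globalf,t}$ and $\acqLCB_t \geq \LCB_{\globalf,t}$ pointwise, so the global-GP sandwich is at least as strong as the intersection version and costs no extra probability.

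Next I would evaluate the sandwich at the optimum. Because the discretization is assumed to contain $\instance^* = \argmax_{\instance\in\searchSpace} f(\instance)$, the upper half of the sandwich at $\instance = \instance^*$ yields
\[
\UCBit_{\globalf, t}(\instance^*) \;\geq\; f(\instance^*) \;=\; f^*.
\]
For the threshold side, the lower half gives $\LCB_{\globalf, t}(\instance) \leq f(\instance) \leq f^*$ for every $\instance \in \discreteSet$; taking the maximum over $\instance$ and recalling $\LCB_{\globalf,t,\max} = \max_{\instance} \LCB_{\globalf,t}(\instance)$ produces $\LCB_{\globalf, t, \max} \leq f^*$.

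Chaining the two bounds on the same high-probability event gives
\[
\UCBit_{\globalf, t}(\instance^*) \;\geq\; f^* \;\geq\; \LCB_{\globalf, t, \max},
\]
which is exactly the membership condition in the definition of $\roi_t$ in \eqref{eq:roi}. Hence $\instance^* \in \roi_t$ holds for every $t \geq 1$ with probability at least $1-\delta$, completing the claim.

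The only delicate point is to ensure that the $\max$ defining $\LCB_{\globalf,t,\max}$ (taken over $\searchSpace$ in the algorithm) is consistent with the concentration event, which is only guaranteed over the discretization $\discreteSet$. The cleanest way to handle this is to treat the candidate set as discrete, as \algref{alg:main} does in practice, so that $\discreteSet = \searchSpace$ suffices; if $\searchSpace$ is continuous one would instead pick $\discreteSet$ fine enough to contain both $\instance^*$ and the maximizer of $\LCB_{\globalf,t}$, which is the main technical wrinkle but is a standard discretization argument in the BO literature.
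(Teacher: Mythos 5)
Your proof is correct and follows essentially the same route as the paper: both invoke the pointwise concentration event behind \lemref{lem:CI}, evaluate the upper bound at $\instance^*$ and the lower bound at the $\LCB$-maximizer, and chain $\UCBit_{\globalf,t}(\instance^*) \geq f^* \geq \LCB_{\globalf,t,\max}$ to conclude membership in $\roi_t$. Your closing remark about reconciling the maximum over $\searchSpace$ with the discretization $\discreteSet$ is a fair point that the paper's own proof glosses over, but it does not change the argument.
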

For simplification, we use the notation $\discreteROI = \discreteSet \cap \roi$. Taking the union bound over \lemref{lem:CI} and \corref{cor:CI}, we obtain the following result:
\begin{cor} \label{cor:CI2} 
With probability at least $1-2\delta$, the global optimum lies in the following interval $\hatCI_{t,\max}(\discreteROI) \subseteq \tildeCI_{t,\max}(\discreteROI)$.

\end{cor}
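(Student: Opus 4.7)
The plan is to prove the two distinct claims in Corollary~\ref{cor:CI2} separately: (i) the deterministic interval containment $\hatCI_{t,\max}(\discreteROI) \subseteq \tildeCI_{t,\max}(\discreteROI)$, and (ii) the high-probability containment $f^* \in \hatCI_{t,\max}(\discreteROI)$. Once these are in place the corollary follows immediately.

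For claim (i), I would argue directly from the definitions in \eqref{eq:acqLCB+UCB} and \eqref{eq:acqLCBt+UCBt}. For every $\instance \in \searchSpace$, the quantity $\acqLCB_{t}(\instance)$ takes a maximum over the strictly larger index set $\{(i,f) : i \leq t,\; f \in \{\roif,\globalf\}\}$ that contains the index set used by $\acqLCBt_{t}(\instance)$ (namely $i=t$ only), so $\acqLCBt_{t}(\instance) \leq \acqLCB_{t}(\instance)$. By the dual argument with $\min$, $\acqUCB_{t}(\instance) \leq \acqUCBt_{t}(\instance)$. Taking $\max_{\instance \in \discreteROI}$ preserves both inequalities, yielding $\max_{\discreteROI}\acqLCBt_{t} \leq \max_{\discreteROI}\acqLCB_{t}$ and $\max_{\discreteROI}\acqUCB_{t} \leq \max_{\discreteROI}\acqUCBt_{t}$, which is exactly the required containment of intervals.

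For claim (ii), I would apply the union bound to the two high-probability events supplied by \lemref{lem:CI} and \corref{cor:CI}. Let $\gE_1 = \{f^* \in \hatCI_{t,\max}(\discreteSet)\}$ and $\gE_2 = \{\instance^* \in \roi_t\}$; each holds with probability at least $1-\delta$, so $\Pr{\gE_1 \cap \gE_2} \geq 1 - 2\delta$. On this joint event, since by the hypothesis of \lemref{lem:CI} the discretization $\discreteSet$ contains $\instance^*$, we have $\instance^* \in \discreteSet \cap \roi_t = \discreteROI$. The standard GP confidence-bound argument underlying \lemref{lem:CI} in fact establishes the stronger pointwise bound $\acqLCB_{t}(\instance) \leq f(\instance) \leq \acqUCB_{t}(\instance)$ for every $\instance$ in the discretization; applying this at $\instance^*$ gives $\acqLCB_{t}(\instance^*) \leq f^* \leq \acqUCB_{t}(\instance^*)$, and since $\instance^* \in \discreteROI$ we can take maxima to obtain $\max_{\instance \in \discreteROI}\acqLCB_{t}(\instance) \leq f^*$ on the upper end (using $f^* \geq f(\instance)$ for all $\instance$) and $f^* \leq \max_{\instance \in \discreteROI}\acqUCB_{t}(\instance)$ on the lower end, i.e.\ $f^* \in \hatCI_{t,\max}(\discreteROI)$. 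Combining with claim (i) yields $f^* \in \tildeCI_{t,\max}(\discreteROI)$.

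The only subtle point, and the one I would flag as the main care-needed step, is the lower bound $\max_{\discreteROI}\acqLCB_{t}(\instance) \leq f^*$: this is not immediate from the pointwise bound at $\instance^*$ alone, but follows because for every $\instance$ in the discretization we have $\acqLCB_{t}(\instance) \leq f(\instance) \leq f^*$, so the inequality is preserved after taking the maximum over $\discreteROI$. Everything else is bookkeeping on unions of events and comparing max/min over nested index sets.
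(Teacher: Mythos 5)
Your proposal is correct and follows essentially the same route as the paper, which proves \corref{cor:CI2} simply by taking the union bound over \lemref{lem:CI} and \corref{cor:CI}; you supply the same union-bound argument and additionally spell out the definitional containment $\hatCI_{t,\max}(\discreteROI) \subseteq \tildeCI_{t,\max}(\discreteROI)$ via the nested index sets in \eqref{eq:acqLCB+UCB} and \eqref{eq:acqLCBt+UCBt}, which the paper leaves implicit. The subtle point you flag (that the lower endpoint bound must hold over all of $\discreteROI$, not just at $\instance^*$) is handled correctly by the pointwise confidence bounds.
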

\corref{cor:CI2} indicates that by narrowing the interval, we could achieve efficient filtering in BALLET and identify the near-optimal areas. Define the maximum information gain about unknown function $f$ after $T$ rounds  as $\maxInfo_{f, T} = \max_{\actionSet\subset \discreteSet: \vert \actionSet \vert=T}{\mutualinfo{y_\actionSet; f_\actionSet}}$. Also, define 
$$\widehat{\maxInfo_T} = \min_{f \in \{\globalf, \roif\}}{\maxInfo_{f, T}}.$$
The following results shows that $
|\hatCI_{t}(\instance)| = \acqUCB_{t}(\instance) - \acqLCB_{t}(\instance)$ serves the purpose of efficiently narrowing the interval and the resulting range of it is bounded by $\maxInfoBALLET_{T}$.

\begin{prop}\label{prop:regret}
Under the same conditions assumed in \lemref{lem:CI} except for $\beta_t=2\log(2\vert \discreteROI \vert \pi_t/ \delta)$, with acquisition function $
|\widehat{ \CI}_{t}(\instance)| = \acqUCB_{t}(\instance) - \acqLCB_{t}(\instance)$, after at most $T \geq \frac{\beta_T \maxInfoBALLET_T C_1}{\epsilon^2}$ iterations, 
$\Pr{|\hatCI_{T,\max}(\discreteROI)| \leq \epsilon} \geq 1 - 2\delta$. Here  $C_1=8/\log(1+\sigma^{-2})$.

\end{prop}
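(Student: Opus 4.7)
The plan is to combine the confidence-interval validity from Corollary~\ref{cor:CI2} with an information-gain argument in the spirit of \citet{srinivas2009gaussian} and the level-set estimation analysis of \citet{10.5555/2540128.2540322}. First, I would invoke Corollary~\ref{cor:CI2} to condition on the event, of probability at least $1-2\delta$, on which $\instance^* \in \roi_t$ for every $t \leq T$ and all intersected confidence bounds are simultaneously valid. Under this event, the greedy acquisition $\instance_t = \argmax_{\instance \in \roi_t} |\hatCI_t(\instance)|$ attains the maximum width over the current discretized ROI, so I may equate $|\hatCI_t(\instance_t)|$ with $\max_{\instance\in \roi_t}|\hatCI_t(\instance)|$.

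Second, I bound the per-step width in terms of posterior standard deviation. Because $\hatCI_t$ is defined as an intersection of per-GP confidence intervals, for either $f \in \{\globalf, \roif\}$ we have
\[
|\hatCI_t(\instance)| \;\leq\; \UCBit_{f,t}(\instance) - \LCB_{f,t}(\instance) \;=\; 2\beta_t^{1/2}\sigma_{f,t-1}(\instance).
\]
Specializing at $\instance_t$, squaring, summing over $t$, and then applying the classical information-gain bound $\sum_{t=1}^T \sigma_{f,t-1}^2(\instance_t) \leq 2\maxInfo_{f,T}/\log(1+\sigma^{-2})$ (Lemma~5.4 of \citet{srinivas2009gaussian}) for either choice of $f$ and retaining the tighter one, yields
\[
\sum_{t=1}^{T} |\hatCI_t(\instance_t)|^2 \;\leq\; C_1 \beta_T \min_{f\in\{\globalf, \roif\}} \maxInfo_{f,T} \;=\; C_1 \beta_T \maxInfoBALLET_T.
\]

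Third, a pigeonhole argument extracts some $t^* \leq T$ with $|\hatCI_{t^*}(\instance_{t^*})|^2 \leq C_1\beta_T\maxInfoBALLET_T / T \leq \epsilon^2$ whenever $T\geq C_1\beta_T\maxInfoBALLET_T/\epsilon^2$. The greedy property then bounds $|\hatCI_{t^*}(\instance)| \leq \epsilon$ for every $\instance \in \roi_{t^*}$, and the monotonicity of $|\hatCI_t(\instance)|$ in $t$ (a direct consequence of $\acqUCB_t$ being non-increasing and $\acqLCB_t$ non-decreasing under intersection over $i\leq t$) lifts this bound to time $T$. Combining with the elementary inequality $|\hatCI_{T,\max}(\discreteROI)| \leq \max_{\instance \in \discreteROI}|\hatCI_T(\instance)|$ then completes the proof.

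The main obstacle I anticipate is the propagation step: one needs $\discreteROI \subseteq \roi_{t^*}$ so that the greedy argmax bound at $t^*$ actually dominates the width at every point of interest at time $T$, which requires the superlevel-set to be monotonically shrinking in $t$. This should follow from the monotonicity of the intersected CIs combined with the non-decrease of the threshold $\acqLCB_{\max,t}$, but needs careful verification, particularly since the algorithm's ROI definition in line~\ref{alg:ln:filtering} uses the raw per-step $\UCBit_{\globalf,t}$ rather than an intersected version. If necessary, restating the proof in terms of the intersected quantities (which only tightens the CIs and hence shrinks the ROI) removes this worry without affecting the final statement.
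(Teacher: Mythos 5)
Your proposal follows essentially the same route as the paper's proof: condition on the event of \corref{cor:CI2}, dominate the intersected CI width at the acquired (greedy) point by $2\beta_t^{1/2}\sigma_{f,t-1}(\instance_t)$ for either $f\in\{\globalf,\roif\}$, sum squares and apply Lemma~5.4 of \citet{srinivas2009gaussian} to obtain the bound $C_1\beta_T\maxInfoBALLET_T$, and close the argument via monotonicity of the intersected CIs---the paper merely packages the final step as monotonicity of $\max_{\instance}\acqUCB_t(\instance)-\max_{\instance}\acqLCB_t(\instance)$ plus Cauchy--Schwarz rather than your pigeonhole-at-$t^*$-then-lift-to-$T$. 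The nestedness issue you flag (whether $\discreteROI$ sits inside $\roi_{t^*}$, i.e., how the time-varying index set interacts with monotonicity) is a genuine loose end, but the paper's own proof silently assumes the same monotonicity over the varying set, so your attempt is not missing anything the paper actually supplies.
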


The proposition reveals two potential improvements over the global GP-UCB \citep{srinivas2009gaussian} 
on the regret bound brought by \algname. First, $\beta_T$ takes smaller value due to the filtering compared to $\lemref{lem:CI}$ which is also the term in the regret bounds of \cite{srinivas2009gaussian}. Second, $\widehat{\maxInfo_T}$ could potentially be smaller than the global $\maxInfo_{\globalf, T}$ with proper kernel learning on ROI. The following corollary shows the cost of using $\acqUCBt_{t}(\instance) - \acqLCBt_{t}(\instance)$ as acquisition function is $C_2^2$ compared to $\acqUCB_{t}(\instance) - \acqLCB_{t}(\instance)$.

\begin{cor} \label{cor:relax_regret} 
Under the same conditions assumed in \lemref{lem:CI} except for $\beta_t=2\log(2\vert \discreteROI \vert \pi_t/ \delta)$, with acquisition function $\acq(\instance) = |\tildeCI_t(x)|= \acqUCBt_{t}(\instance) - \acqLCBt_{t}(\instance)$, after at most $T \geq \frac{ \beta_T \maxInfoBALLET_T C_1 C_2^2}{\epsilon^2}$ iterations, $\Pr{|\tildeCI_{T,\max}(\discreteROI)| \leq \epsilon} \geq 1 - 2\delta$. Here  $C_1=8/\log(1+\sigma^{-2})$, and $C_2 = \frac{\min_{t\leq T}(|\tildeCI_{t,\max}(\discreteROI)|) }{|\tildeCI_{T,\max}(\discreteROI)|}$.

\end{cor}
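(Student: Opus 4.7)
The plan is to mirror the argument that presumably proves Proposition~\ref{prop:regret} and then pay a multiplicative price arising from the fact that, unlike $|\hatCI_{t,\max}|$, the quantity $|\tildeCI_{t,\max}(\discreteROI)|$ is no longer monotonically non-increasing in $t$ (because $\tildeCI_t$ only intersects the CIs obtained at step $t$, rather than across all previous iterations). I would start by invoking Lemma~\ref{lem:CI} and Corollaries~\ref{cor:CI}--\ref{cor:CI2} to assert that, with probability at least $1-2\delta$, the true function is sandwiched inside the intersected CIs on $\discreteROI$ at every step and the optimum remains inside $\roi_t$. This will make the remainder of the argument deterministic on this high-probability event.

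Next I would control the per-step CI width. By definition of the intersection, $|\tildeCI_t(\instance)| \le 2\beta_t^{1/2}\min\!\bigl\{\sigma_{\roif,t-1}(\instance),\sigma_{\globalf,t-1}(\instance)\bigr\}$. Summing over $t$ and applying the standard information-gain inequality $\sum_{t=1}^{T}\sigma_{f,t-1}^{2}(\instance_t)\le \frac{2}{\log(1+\sigma^{-2})}\maxInfo_{f,T}$ separately to each surrogate, followed by the pointwise minimum, yields
\begin{equation*}
\sum_{t=1}^{T}|\tildeCI_t(\instance_t)|^2 \;\le\; 4\beta_T \cdot \frac{2}{\log(1+\sigma^{-2})}\,\maxInfoBALLET_T \;=\; C_1\beta_T\maxInfoBALLET_T .
\end{equation*}
Because the acquisition chooses $\instance_t \in \argmax_{\instance \in \discreteROI}|\tildeCI_t(\instance)|$, the elementary inequality $|\tildeCI_{t,\max}(\discreteROI)| \le |\tildeCI_t(\instance_t)|$ (obtained by evaluating the shared-argmax of $\acqUCBt_t$ inside its own CI, exactly as for $\hatCI$) transfers the bound to $\sum_{t=1}^{T}|\tildeCI_{t,\max}(\discreteROI)|^2 \le C_1\beta_T\maxInfoBALLET_T$.

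The main obstacle, and the only place where the argument departs from Proposition~\ref{prop:regret}, is converting this sum-of-squares bound into a bound on the terminal width $|\tildeCI_{T,\max}(\discreteROI)|$. In Proposition~\ref{prop:regret} one uses the monotonicity of $|\hatCI_{t,\max}|$ to write $T|\hatCI_{T,\max}|^2 \le \sum_t |\hatCI_{t,\max}|^2$; here monotonicity fails, so I would instead use $T\cdot\min_{t \le T}|\tildeCI_{t,\max}(\discreteROI)|^2 \le \sum_t|\tildeCI_{t,\max}(\discreteROI)|^2$ and then relate the minimum to the terminal value through the defined ratio $C_2$, giving
\begin{equation*}
C_2^{2}\,|\tildeCI_{T,\max}(\discreteROI)|^{2} \;=\; \min_{t\le T}|\tildeCI_{t,\max}(\discreteROI)|^{2} \;\le\; \frac{C_1\beta_T\maxInfoBALLET_T}{T}.
\end{equation*}
Solving for the number of iterations needed so that the right-hand side is at most $\epsilon^2 C_2^{2}$ yields the claimed sample complexity, and the $1-2\delta$ probability is inherited from the union bound already used in Corollary~\ref{cor:CI2}. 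The key observation is that all steps up to the final conversion are identical to Proposition~\ref{prop:regret}'s proof, so the extra factor depending on $C_2$ cleanly quantifies the price of losing monotonicity when using the relaxation \eqref{eq:acqLCBt+UCBt} in place of \eqref{eq:acqLCB+UCB}.
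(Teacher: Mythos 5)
Your proposal takes essentially the same route as the paper, whose entire proof of \corref{cor:relax_regret} is the remark that one repeats the proof of \propref{prop:regret} and, at the last step, replaces the monotonicity-based inequality $T\,|\hatCI_{T,\max}(\discreteROI)|^2\le\sum_{t\le T}|\hatCI_{t,\max}(\discreteROI)|^2$ by $T\min_{t\le T}|\tildeCI_{t,\max}(\discreteROI)|^2\le\sum_{t\le T}|\tildeCI_{t,\max}(\discreteROI)|^2$ together with $\min_{t\le T}|\tildeCI_{t,\max}(\discreteROI)|=C_2\,|\tildeCI_{T,\max}(\discreteROI)|$; your handling of the shared steps (pointwise intersection bound, per-surrogate information-gain sum, transfer from $|\tildeCI_t(\instance_t)|$ to $|\tildeCI_{t,\max}(\discreteROI)|$) is correct. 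One caveat: your final display actually forces $T\ge C_1\beta_T\maxInfoBALLET_T/(C_2^2\epsilon^2)$, i.e.\ $C_2^2$ in the \emph{denominator} — which is the natural reading, since the paper's definition gives $C_2\le 1$ and the relaxation should cost a factor $C_2^{-2}\ge 1$ — whereas the corollary as printed puts $C_2^2$ in the numerator; this discrepancy sits in the paper's statement (or its definition of $C_2$), not in your argument, but you should not claim your inequality ``yields the claimed sample complexity'' without noting it.
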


The proof of \corref{cor:relax_regret} follows the proof of \propref{prop:regret} except leveraging the fact $$\min_{t\leq T}(|\tildeCI_{t,\max}(\discreteROI)|) = C_2|\tildeCI_{T,\max}(\discreteROI)|$$ at its last step.

\subsection{Other \algname variants}
\algname provides a flexible framework for partitioning-based BO. In addition to \interCI, one can run Thompson sampling on ROI as the acquisition function (\roiTS), namely
\begin{equation}\label{eq:acqROI-ts}
    \acqTs(\instance) \triangleq \roif_{t}(\instance) 
\end{equation}
where $\roif_{t} \sim GP_{\roif, t}$.
Another \algname variant is to directly run uncertainty sampling with $\mathcal{GP}_{\roif}$ on $\roi$ 
(\algname-RCI, where RCI is short for ``ROI-CI''):
\begin{equation}\label{eq:acqROI-ci}
    \acqRoici(\instance) \triangleq |\CI_t(\instance)| = \UCBit_{\roif, t}(\instance) -  \LCB_{\roif, t}(\instance) 
\end{equation}

Compared with \roiTS and \roiCI, the intersection of CIs defined in \eqref{eq:acqCI} in \interCI leverages the posterior information of both $\mathcal{GP}_{\roif}$ and $\mathcal{GP}_{\globalf}$. This allows \interCI to efficiently narrow the confidence interval for $f^*$ by explicitly balancing exploration and exploitation, and achieve a high-probability theoretical guarantee on its optimization performance. We also discuss taking the \UCB of the intersection of CI's as the acquisition function (\interUCB) in the appendix.

\clearpage
\section{Experiment}\label{sec:exp}
\paragraph{Experimental setup}

\begin{figure*}[t]
        \centering
            {
          \includegraphics[trim={0pt 0pt 0pt 0pt}, width=.95\textwidth]{./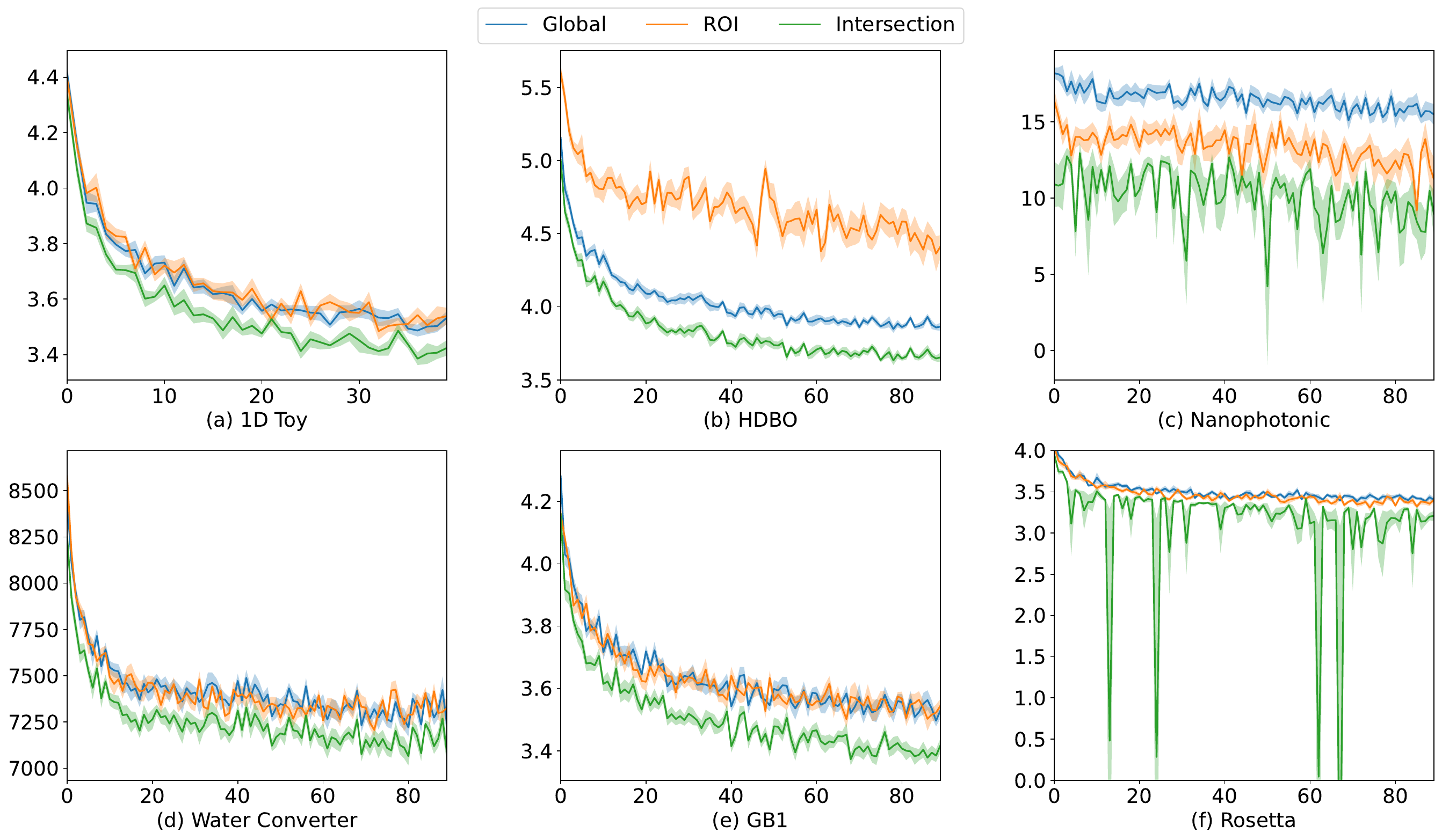}
        }
  \caption{The confidence interval of $f^*$ defined in \corref{cor:CI2}. 
  The results from each task are collected from at least 10 independent trials. The error bar demonstrates the standard error. As $\beta$ varies on different iterations and different search space sizes, we fix $\beta_t=2$ for comparable illustration. The x-axis denotes the number of iterations, and the y-axis denotes the width of the confidence interval.
  }   
  \label{fig:exp:CI}
\end{figure*}

We compare three baseline algorithms in our experiments against \interCI, \roiCI, and \roiTS. The Deep-Kernel-based Bayesian Optimization initialized with a pre-trained AutoEncoder (DKBO-AE) applies the deep kernel where a pre-trained AutoEncoder \footnote{The AutoEncoder is trained with random unlabeled samples.} initializes the neural network $q$ {\citep{zhang2022learning}}. The neural network consists of three hidden layers with 1000, 500, and 50 neurons, and ReLU non-linearity respectively. {The output layer is one-dimensional.} We use squared exponential kernel or linear kernel as the base kernel, i.e. $k_{\SE}(\instance,\instance')=\sigma^2_{\SE}\exp(-\frac{(\instance-\instance')^2}{2l})$ or $k_{\Linear}(\instance,\instance')=\sigma^2_{\Linear}(\instance^T\instance)$, for the deep kernel, and Thompson Sampling \citep{chapelle2011empirical} for the acquisition function $\alpha$. Two other partition-based BO algorithms LA-MCTS \citep{wang2020learning} and TuRBO \citep{eriksson2019scalable} serve as the baselines. Note that DKBO-AE is used as the subroutine for LA-MCTS, {TuRBO-DK}, and \roiTS. The neural network architecture, base kernel and acquisition function are the same. 
\roiCI and \interCI also share the same deep kernel except for applying different acquisition functions. The comparison between \roiTS and DKBO-AE serves as the ablation study of \textbf{the proposed partitioning method}. The comparison between \roiCI and \interCI also serves as the ablation study of \textbf{taking the intersection of CI} as defined in \eqref{eq:acqLCBt+UCBt}. 

One crucial problem in practice is tuning the hyperparameters. For each of the algorithms, the same 10 randomly picked points serve as the warm-up set. We take the default hyperparameters from the open-sourced LA-MCTS \footnote{\scriptsize \url{https://github.com/facebookresearch/LaMCTS}} and TuRBO \footnote{\scriptsize\url{https://botorch.org/tutorials/turbo_1}} implementation. Note that we choose TuRBO-1 implementation for TuRBO where there is one trust region through the optimization, as previous work has shown its robust performance in various tasks \citep{eriksson2019scalable}. For \interCI, we set $\delta$ in \lemref{lem:CI} to be 0.2. In addition, we find that using the $\beta^{1/2}_t$ in \lemref{lem:CI} to identify the ROI could be over-conservative in that it can not filter many areas and let \interCI regress to DKBO-AE with two similar GPs. Through the experiments, we fix $\beta^{1/2}_t=0.2$ only when identifying ROIs as in \lineref{alg:ln:filtering} of \algoref{alg:main}. For all the tested algorithms, the base kernels are squared exponential kernels except for Nanophotonics and Water Converter where we applied linear kernels as the base kernel. We defer the detailed study of parameter choices in \interCI to the appendix.

\begin{figure*}[t]
        \centering
        {
          \includegraphics[trim={0pt 0pt 0pt 0pt}, width=.95\textwidth]{./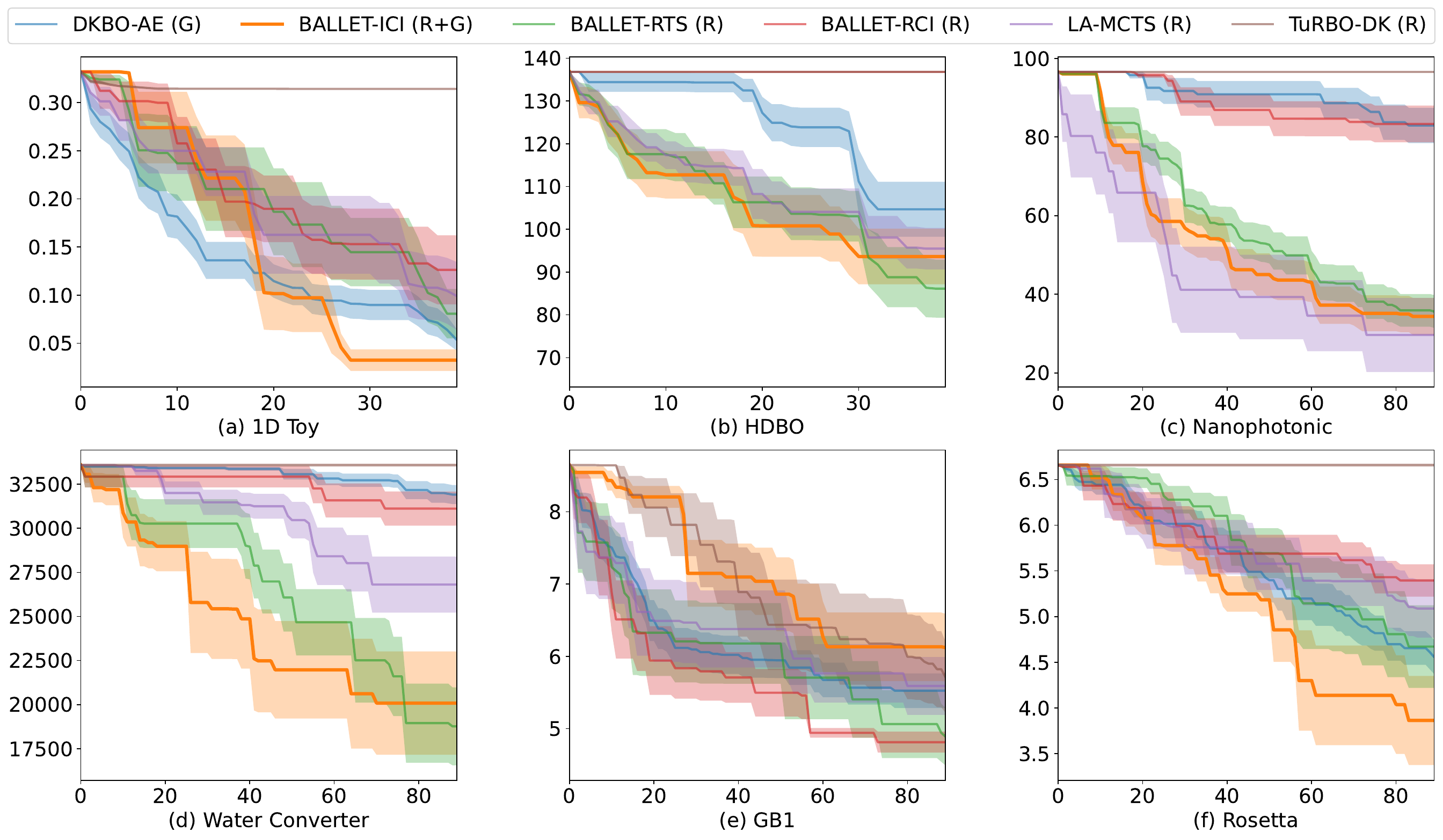}
        } 
  \caption{Simulation results.
  The results from each task are collected from at least 10 independent trials. The error bar demonstrates the standard error. The x-axis denotes the number of iterations, and the y-axis denotes the simple regret. The simple regrets for the 10 initial randomly picked warm-up datasets are clipped. (G), (R), and (R+G) means the global model only, the ROI model only, and the ROI model combined with the global model correspondingly. }   
  \label{fig:exp:simpleRegret}
  \vspace{-6mm}
\end{figure*}

\paragraph{Datasets}
We study the performance of \algname on two synthetic tasks and four pre-collected datasets described below.

\subparagraph{1D-Toy.}
We create a synthetic dataset 1D-Toy of one dimension to illustrate the process of \interCI as is shown in \secref{sec:method}. The function is defined on $\instance\in[-1,1]$ as $f(\instance) = \sin(64\vert{\instance}\vert^ 4) - (\instance-0.2)^2$. This toy function consists of two high-frequency areas on both sides and a low-frequency area in the middle. The neural network is pre-trained on 100 data points.

\subparagraph{HDBO-200D.}
We create a synthetic dataset Sum-200D of 200 dimensions. Each dimension is independently sampled from a standard normal distribution to maximize the uncertainty on that dimension and examine the algorithm's capability to solve the medium-dimensional problem. We want to maximize the label $f(\instance) = \sum^{200}_{i=1}{e^{x_i}}$ which bears an additive structure and of non-linearity. The neural network is pre-trained on 100 data points. 

\subparagraph{Water Converter Configuration-{32D}.} 
This UCI dataset we use consists of positions and absorbed power outputs of wave energy converters (WECs) from the southern coast of Sydney. The applied converter model is a fully submerged three-tether converter called CETO. 16 WECs 2D-coordinates are placed and optimized in a size-constrained environment. Note its values are at the order of $O(10^6)$.

\subparagraph{Nanophotonics Structure Design{-5D}.}
We wish to optimize a weighted figure of merit quantifying the fitness of the transmission spectrum for hyperspectral imaging as assessed by a numerical solver~\citep{song:18.2}. This problem has a 5-dimensional input corresponding to the physical design dimensions of a potential filter. 
Although the input is not high-dimensional, the function represents a discrete solution of Maxwell's equations and has a complex value landscape. 

\subparagraph{GB1-{118D}.}
We use a protein dataset in which the objective is to maximize stability fitness predictions for the Guanine nucleotide-binding protein GB1 given different sequence mutations in a target
region of 4 residues \citep{wu2019machine}. Specifically, we use the ESM embedding generated by a transformer protein language model \citep{rives2021biological}.

\subparagraph{Rosetta Protein Design-{86D}.} 
We use another protein engineering dataset describing a set of antigen/antibody binding calculations. These calculations, executed using supercomputing resources, estimate the change in binding free energy at the interface between each of 71769 modified antibodies and the SARS-CoV-2 spike protein, as compared to the single reference antibody from which they are derived. Estimations of binding free energy ($\Delta\Delta G$) are calculated using protein-structure-based Rosetta Flex simulation software \citep{das2008macromolecular, barlow2018flex}.  These calculations took several CPU hours each and were produced during an antibody design process \citep{desautels2020rapid,desautels2022computationally}.  
Inputs are described with an 80-dimensional feature vector that, relative to the reference sequence, describes changes in the interface between the antibody and the corresponding target region on the SARS-CoV-2 spike. 
This is a particularly relevant problem setting when trying to rapidly choose antibody candidates to respond to a new disease in a timely fashion.

\paragraph{Confidence Intervals}
As shown in \figref{fig:exp:CI}, the CIs of $f^*$ through the optimization of \interCI do not constantly narrow. Instead, on Nanophotonics the width generally remains the same through the 90 iterations, indicating the challenge of fitting these datasets with limited data points and therefore optimizing it with the underfitted GPs. 

The intersection CI is consistently narrower than the other two, where there is no consistent superiority against each other. Though on HDBO the ROI curve is above the global curve, the resulting intersection CI still improves upon global CI, indicating that the maximizer of $\acqUCBt_{t}$ and $\acqLCBt_{t}$ are different for global GP and ROI GP.

The dynamics of kernel learning results in empty intersections on Rosetta, where occasionally the width of CI for $f^*$ turns out to be zero, showing the potential problem in taking the intersection of all historical CIs. Future improvement on \interCI could be better aligning the CI of both ROI and global GPs through different iterations to allow taking the intersection of all historical CIs as in \propref{prop:regret}.

The intersection curve converges faster to non-zero values on both HDBO and GB1 showing the benefits of taking the intersection of global CI and ROI CI as it better captures the localities with ROI GP while not losing information of global GP. However, the width of CI could not directly serve as the indicator for the optimization performance. On GB1, the intersection curve is uniformly better than both the ROI and global CI, while in \figref{fig:exp:simpleRegret}, \interCI does not outperform DKBO-AE, \roiTS, or \algname-ROI-UCB as the CI for $f^*$ is still larger than 3.3.

\newcommand\tableW{3.6cm}
\newcommand\tableCW{1.4cm}
\newcolumntype{P}[1]{>{\centering\arraybackslash}p{#1}}
\newcolumntype{M}[1]{>{\centering\arraybackslash}m{#1}}
\begin{table*}[ht]
\renewcommand*{\arraystretch}{2}
\centering
\resizebox{\linewidth}{!}{%
\begin{tabular}{ |M{1.7cm}||M{2cm}|M{2cm}|M{2cm}|M{\tableW}| M{\tableW}|M{\tableW}| }
 \hline
    \textbf{Methods} & DKBO-AE & LA-MCTS & TuRBO & \roiTS & \roiCI & \interCI  \\
 \hline\hline
    \textbf{BO Model} & Global & Local & Local & Local & Local & Global + Local \\
 \hline
    \textbf{Acquisition Function} & $f_{g,t} \sim GP_{f_g, t}$   & \multicolumn{3}{c|}{$\roif_{t} \sim GP_{\roif, t}$}  &  $ \UCBit_{\roif, t}(\instance) -  \LCB_{\roif, t}(\instance)$ &
    $\acqUCBt_{t}(\instance) - \acqLCBt_{t}(\instance)$\\
 \hline
    \textbf{Average Ranking} &   3.67  & 3.33   & 5.83 & \textbf{2.00} &  3.50 & \textbf{2.33}\\

 \hline
\end{tabular}}
\small
\caption{\rebuttal{Comparison of different methods tested in the experiments. The BO model row shows the model on which the acquisition function is defined. LA-MCTS relies on global Monte-Carlo tree search, and both TuRBO and \roiTS rely on a global GP to identify the ROI/Trust Region, despite being tagged as `local' for the BO model. The acquisition functions for DKBO-AE, LA-MCTS, TuRBO and \roiTS are Thompson sampling on different GPs. The acquisition functions for \roiCI and \interCI are defined in \eqref{eq:acqROI-ci} and \eqref{eq:acqCI}, respectively. The average ranking corresponds to the ultimate simple regrets shown in \figref{fig:exp:simpleRegret}}. We highlight the rankings of the proposed methods.} \label{table:methods}

\end{table*}

\paragraph{Optimization Performance} The experiment results in \figref{fig:exp:simpleRegret} demonstrate the robust performance of \interCI which consistently matches or outperforms the best baseline. In contrast, LA-MCTS consistently matches or outperforms TuRBO-DK,  but lags behind DKBO-AE on the 1D Toy which indicates its potential inefficiency in the tasks of high-frequency areas hindering its partitioning of the search space. Note that we also find that using SVM to generalize the partition on $Y_t$ to $\searchSpace$ in LA-MCTS occasionally fails possibly due to the intrinsic complexity of the partition learned on $Y_t$ demanding methods of greater capability, while the level-set partition of \interCI is regularized by the smoothness of the global Gaussian process $\mathcal{GP}_{f_g}$. We reject the failed LA-MCTS trials.

{TuRBO-DK} matches \interCI performance on GB1 and but loses to DKBO-AE on all other cases.  By construction, 1D Toy and HDBO-200D could have a large amount of distant local maximum, while {TuRBO-DK} relies on the locality of the observation to identify the trust regions. {TuRBO-DK} could be potentially trapped in the local maximum and the performance degrades in the scenario where the multiple modules are distant from each other while the gap between sub-optimal and optimal observation is significant. In contrast, \interCI is capable of identifying multiple regions of interest with the level-set partitioning without specifying the desired number of regions.

On 1D-Toy dataset which is composed of the low-value high-frequency areas and the high-value low-frequency area, \interCI significantly outperforms the baselines and reaches the near-optimal area within 30 iterations. Due to the complexity of the low-value areas that make up a large portion of the objective, the GPs underfit especially at the earlys stage where access to observation is limited as is shown in \figref{fig:overview}. At this phase, the DKBO-AE stably outperforms \interCI potentially without the distraction from the under-fitting ROI GP. While on HDBO datasets which by construction bears relative uniform smoothness, the partition-based algorithms other than {TuRBO-DK} all enjoy similar benefits at the beginning stage.

On Nanophotonics, Water-Converter and Rosetta, \interCI matches or outperforms the baselines including \roiCI, while losing to \roiCI and \roiTS on GB1. This ablation study indicates the necessity of taking the intersection of CI in most scenarios, while revealing that more aggressive filtering of \roiCI could sometime be beneficial. \roiTS matches \algname on HDBO, Nanophotonics,  Water-Converter and Rosetta, reflecting that the {ROI} GP could be as informative as the combination of the global and {ROI} GP in some cases. The fact that \roiTS uniformly outperforms LA-MCTS and TuRBO-DK on all the experiments requires further study on integrating Thompson sampling into a \algname-style framework with a similar theoretical guarantee.

\rebuttal{We summarize the different methods tested in the experiments with \tabref{table:methods}. The comparison between \roiTS with both LA-MCTS and TuRBO shows the effectiveness of identifying the ROI on a point-basis on the given discretization with confidence intervals of a global GP. The comparison between \roiCI and \interCI highlights the benefits on integrating information from global model into optimization on ROI especially when using the confidence interval as acquisition function.}

\section{Conclusion}\label{sec:conclusion}

We propose a novel framework for adaptively learning regions of interest for Bayesian optimization. Our model maintains two Gaussian processes: One global model for identifying the ROIs as (adaptive) superlevel-sets; the other surrogate model for acquiring data in these high-confidence ROIs. We proposed to take the width of the intersection of the point-wise confidence intervals of both GPs as the acquisition function to achieve a theoretical guarantee on both the convergence rate of the filtering and optimization process. We demonstrate our algorithm in promising real-world experiment design scenarios, including protein engineering and material science. Our results show that {\algname} compares favorably against state-of-the-art BO approaches under similar settings---especially in high-dimensional and structured tasks with non-stationary dynamics---while having fewer hyperparameters to fine-tune. 

{More generally, we propose the principled framework combining the power of a coarse global model for filtering low-interest areas and a fine-grained local model for focused optimization, supported by theoretical insights.} We show the potential of the framework by integrating Thompson sampling, and the extensions to other acquisition functions that are not based on confidence intervals are also of interest for future work. We demonstrate the practical issues of taking the intersection of all historical CIs and discuss the cost of only taking the intersection of CIs at each time step. This raises the demand for future studies on addressing the dynamics of (deep) kernel learning.

\paragraph{Acknowledgement}
{This work was performed under the auspices of the U.S. Department of Energy by Lawrence Livermore National Laboratory under contract DE-AC52-07NA27344 and was supported by the LLNL-LDRD Program under Project No. 20-ERD-032. Lawrence Livermore National Security, LLC. LLNL-CONF-841006.
The DOD's Joint Program Executive Office for Chemical, Biological, Radiological and Nuclear Defense (JPEO-CBRND) under the GUIDE program, in collaboration with the Defense Health Agency (DHA) COVID funding initiative for Rapid co-design of manufacturable and efficacious antibody therapeutics for COVID-19 via a machine-learning-driven computational design platform, molecular dynamics simulations, and experimental validation, Lawrence Livermore National Laboratory (LLNL), Proposal L22260, Agreement ID\#44208 was used for this effort.
Fengxue Zhang was supported in part by NSF \#2037026.}

\bibliography{reference}

\begin{thebibliography}{46}
\providecommand{\natexlab}[1]{#1}
\providecommand{\url}[1]{\texttt{#1}}
\expandafter\ifx\csname urlstyle\endcsname\relax
  \providecommand{\doi}[1]{doi: #1}\else
  \providecommand{\doi}{doi: \begingroup \urlstyle{rm}\Url}\fi

\bibitem[Barlow et~al.(2018)Barlow, O~Conchuir, Thompson, Suresh, Lucas,
  Heinonen, and Kortemme]{barlow2018flex}
Kyle~A Barlow, Shane O~Conchuir, Samuel Thompson, Pooja Suresh, James~E Lucas,
  Markus Heinonen, and Tanja Kortemme.
\newblock Flex ddg: Rosetta ensemble-based estimation of changes in
  protein--protein binding affinity upon mutation.
\newblock \emph{The Journal of Physical Chemistry B}, 122\penalty0
  (21):\penalty0 5389--5399, 2018.

\bibitem[Bengio et~al.(2005)Bengio, Delalleau, and Le~Roux]{bengio2005curse}
Yoshua Bengio, Olivier Delalleau, and Nicolas Le~Roux.
\newblock The curse of dimensionality for local kernel machines.
\newblock \emph{Techn. Rep}, 1258:\penalty0 12, 2005.

\bibitem[Berkenkamp et~al.(2016)Berkenkamp, Schoellig, and
  Krause]{Berkenkamp2016SafeOpt}
Felix Berkenkamp, Angela~P. Schoellig, and Andreas Krause.
\newblock Safe controller optimization for quadrotors with {G}aussian
  processes.
\newblock In \emph{Proc. of the IEEE International Conference on Robotics and
  Automation (ICRA)}, pages 493--496, 2016.

\bibitem[Binois and Wycoff(2022)]{10.1145/3545611}
Micka\"{e}l Binois and Nathan Wycoff.
\newblock A survey on high-dimensional gaussian process modeling with
  application to bayesian optimization.
\newblock \emph{ACM Trans. Evol. Learn. Optim.}, 2\penalty0 (2), aug 2022.
\newblock ISSN 2688-299X.
\newblock \doi{10.1145/3545611}.
\newblock URL \url{https://doi.org/10.1145/3545611}.

\bibitem[Bogunovic et~al.(2016)Bogunovic, Scarlett, Krause, and
  Cevher]{bogunovic2016truncated}
Ilija Bogunovic, Jonathan Scarlett, Andreas Krause, and Volkan Cevher.
\newblock Truncated variance reduction: A unified approach to bayesian
  optimization and level-set estimation.
\newblock \emph{Advances in neural information processing systems}, 29, 2016.

\bibitem[Chapelle and Li(2011)]{chapelle2011empirical}
Olivier Chapelle and Lihong Li.
\newblock An empirical evaluation of thompson sampling.
\newblock \emph{Advances in neural information processing systems}, 24, 2011.

\bibitem[Das and Baker(2008)]{das2008macromolecular}
Rhiju Das and David Baker.
\newblock Macromolecular modeling with rosetta.
\newblock \emph{Annu. Rev. Biochem.}, 77:\penalty0 363--382, 2008.

\bibitem[Desautels et~al.(2020)Desautels, Zemla, Lau, Franco, and
  Faissol]{desautels2020rapid}
Thomas Desautels, Adam Zemla, Edmond Lau, Magdalena Franco, and Daniel Faissol.
\newblock Rapid in silico design of antibodies targeting sars-cov-2 using
  machine learning and supercomputing.
\newblock \emph{BioRxiv}, 2020.

\bibitem[Desautels et~al.(2022)Desautels, Arrildt, Zemla, Lau, Zhu, Ricci,
  Cronin, Zost, Binshtein, Scheaffer, et~al.]{desautels2022computationally}
Thomas~A Desautels, Kathryn~T Arrildt, Adam~T Zemla, Edmond~Y Lau, Fangqiang
  Zhu, Dante Ricci, Stephanie Cronin, Seth Zost, Elad Binshtein, Suzanne~M
  Scheaffer, et~al.
\newblock Computationally restoring the potency of a clinical antibody against
  sars-cov-2 omicron subvariants.
\newblock \emph{bioRxiv}, pages 2022--10, 2022.

\bibitem[Djolonga et~al.(2013)Djolonga, Krause, and Cevher]{djolonga2013high}
Josip Djolonga, Andreas Krause, and Volkan Cevher.
\newblock High-dimensional gaussian process bandits.
\newblock \emph{Advances in neural information processing systems}, 26, 2013.

\bibitem[Eriksson et~al.(2019)Eriksson, Pearce, Gardner, Turner, and
  Poloczek]{eriksson2019scalable}
David Eriksson, Michael Pearce, Jacob Gardner, Ryan~D Turner, and Matthias
  Poloczek.
\newblock Scalable global optimization via local {Bayesian} optimization.
\newblock In \emph{Advances in Neural Information Processing Systems}, pages
  5496--5507, 2019.

\bibitem[Ferreira et~al.(2020)Ferreira, Camacho, and
  Teixeira]{ferreira2020using}
Mafalda~Falc{\~a}o Ferreira, Rui Camacho, and Lu{\'\i}s~F Teixeira.
\newblock Using autoencoders as a weight initialization method on deep neural
  networks for disease detection.
\newblock \emph{BMC Medical Informatics and Decision Making}, 20\penalty0
  (5):\penalty0 1--18, 2020.

\bibitem[Gotovos et~al.(2013)Gotovos, Casati, Hitz, and
  Krause]{10.5555/2540128.2540322}
Alkis Gotovos, Nathalie Casati, Gregory Hitz, and Andreas Krause.
\newblock Active learning for level set estimation.
\newblock In \emph{Proceedings of the Twenty-Third International Joint
  Conference on Artificial Intelligence}, IJCAI '13, page 1344–1350. AAAI
  Press, 2013.
\newblock ISBN 9781577356332.

\bibitem[Kawaguchi et~al.(2016)Kawaguchi, Maruyama, and
  Zheng]{kawaguchi2016global}
Kenji Kawaguchi, Yu~Maruyama, and Xiaoyu Zheng.
\newblock Global continuous optimization with error bound and fast convergence.
\newblock \emph{Journal of Artificial Intelligence Research}, 56:\penalty0
  153--195, 2016.

\bibitem[Kirschner et~al.(2019)Kirschner, Mutny, Hiller, Ischebeck, and
  Krause]{kirschner2019adaptive}
Johannes Kirschner, Mojmir Mutny, Nicole Hiller, Rasmus Ischebeck, and Andreas
  Krause.
\newblock Adaptive and safe bayesian optimization in high dimensions via
  one-dimensional subspaces.
\newblock In \emph{International Conference on Machine Learning}, pages
  3429--3438. PMLR, 2019.

\bibitem[Letham et~al.(2020)Letham, Calandra, Rai, and Bakshy]{Letham2020Re}
Benjamin Letham, Roberto Calandra, Akshara Rai, and Eytan Bakshy.
\newblock Re-examining linear embeddings for high-dimensional {B}ayesian
  optimization.
\newblock In \emph{Advances in Neural Information Processing Systems 33},
  NeurIPS, 2020.

\bibitem[Makarova et~al.(2021)Makarova, Usmanova, Bogunovic, and
  Krause]{makarova2021risk}
Anastasia Makarova, Ilnura Usmanova, Ilija Bogunovic, and Andreas Krause.
\newblock Risk-averse heteroscedastic bayesian optimization.
\newblock \emph{Advances in Neural Information Processing Systems}, 34, 2021.

\bibitem[McIntire et~al.(2016)McIntire, Ratner, and Ermon]{mcintire2016sparse}
Mitchell McIntire, Daniel Ratner, and Stefano Ermon.
\newblock Sparse gaussian processes for bayesian optimization.
\newblock In \emph{UAI}, 2016.

\bibitem[Merrill et~al.(2021)Merrill, Fern, Fern, and
  Dolatnia]{JMLR:v22:18-220}
Erich Merrill, Alan Fern, Xiaoli Fern, and Nima Dolatnia.
\newblock An empirical study of bayesian optimization: Acquisition versus
  partition.
\newblock \emph{Journal of Machine Learning Research}, 22\penalty0
  (4):\penalty0 1--25, 2021.

\bibitem[Moss et~al.(2023)Moss, Ober, and Picheny]{moss2023inducing}
Henry~B Moss, Sebastian~W Ober, and Victor Picheny.
\newblock Inducing point allocation for sparse gaussian processes in
  high-throughput bayesian optimisation.
\newblock In \emph{International Conference on Artificial Intelligence and
  Statistics}, pages 5213--5230. PMLR, 2023.

\bibitem[Munos(2011)]{munos2011optimistic}
R{\'e}mi Munos.
\newblock Optimistic optimization of a deterministic function without the
  knowledge of its smoothness.
\newblock \emph{Advances in neural information processing systems}, 24, 2011.

\bibitem[Munos(2014)]{8187198}
Rémi Munos.
\newblock \emph{From Bandits to Monte-Carlo Tree Search: The Optimistic
  Principle Applied to Optimization and Planning}.
\newblock now, 2014.

\bibitem[Munteanu et~al.(2019)Munteanu, Nayebi, and Poloczek]{HeSBO19}
Alex Munteanu, Amin Nayebi, and Matthias Poloczek.
\newblock A framework for bayesian optimization in embedded subspaces.
\newblock In \emph{Proceedings of the 36th International Conference on Machine
  Learning, {(ICML)}}, 2019.
\newblock Accepted for publication. The code is available at
  https://github.com/aminnayebi/HesBO.

\bibitem[Papenmeier et~al.(2022)Papenmeier, Nardi, and
  Poloczek]{papenmeier2022increasing}
Leonard Papenmeier, Luigi Nardi, and Matthias Poloczek.
\newblock Increasing the scope as you learn: Adaptive bayesian optimization in
  nested subspaces.
\newblock In \emph{Advances in Neural Information Processing Systems}, 2022.

\bibitem[Rasmussen and Williams(2006)]{rasmussen:williams:2006}
C.~E. Rasmussen and C.~K.~I. Williams.
\newblock \emph{Gaussian Processes for Machine Learning}.
\newblock MIT Press, 2006.

\bibitem[Rives et~al.(2021)Rives, Meier, Sercu, Goyal, Lin, Liu, Guo, Ott,
  Zitnick, Ma, et~al.]{rives2021biological}
Alexander Rives, Joshua Meier, Tom Sercu, Siddharth Goyal, Zeming Lin, Jason
  Liu, Demi Guo, Myle Ott, C~Lawrence Zitnick, Jerry Ma, et~al.
\newblock Biological structure and function emerge from scaling unsupervised
  learning to 250 million protein sequences.
\newblock \emph{Proceedings of the National Academy of Sciences}, 118\penalty0
  (15):\penalty0 e2016239118, 2021.

\bibitem[Salgia et~al.(2021)Salgia, Vakili, and Zhao]{salgia2021domain}
Sudeep Salgia, Sattar Vakili, and Qing Zhao.
\newblock A domain-shrinking based bayesian optimization algorithm with
  order-optimal regret performance.
\newblock \emph{Advances in Neural Information Processing Systems},
  34:\penalty0 28836--28847, 2021.

\bibitem[Sazanovich et~al.(2021)Sazanovich, Nikolskaya, Belousov, and
  Shpilman]{pmlr-v133-sazanovich21a}
Mikita Sazanovich, Anastasiya Nikolskaya, Yury Belousov, and Aleksei Shpilman.
\newblock Solving black-box optimization challenge via learning search space
  partition for local bayesian optimization.
\newblock In Hugo~Jair Escalante and Katja Hofmann, editors, \emph{Proceedings
  of the NeurIPS 2020 Competition and Demonstration Track}, volume 133 of
  \emph{Proceedings of Machine Learning Research}, pages 77--85. PMLR, 06--12
  Dec 2021.

\bibitem[Shahriari et~al.(2016)Shahriari, Swersky, Wang, Adams, and
  de~Freitas]{7352306}
Bobak Shahriari, Kevin Swersky, Ziyu Wang, Ryan~P. Adams, and Nando de~Freitas.
\newblock Taking the human out of the loop: A review of bayesian optimization.
\newblock \emph{Proceedings of the IEEE}, 104\penalty0 (1):\penalty0 148--175,
  2016.
\newblock \doi{10.1109/JPROC.2015.2494218}.

\bibitem[Snoek et~al.(2012)Snoek, Larochelle, and Adams]{snoek2012practical}
Jasper Snoek, Hugo Larochelle, and Ryan~P Adams.
\newblock Practical bayesian optimization of machine learning algorithms.
\newblock In \emph{26th Annual Conference on Neural Information Processing
  Systems 2012}, pages 2951--2959, 2012.

\bibitem[Song et~al.(2018)Song, Tokpanov, Chen, Fleischman, Fountaine, Atwater,
  and Yue]{song:18.2}
Jialin Song, Yury Tokpanov, Yuxin Chen, Dagny Fleischman, Kate Fountaine, Harry
  Atwater, and Yisong Yue.
\newblock Optimizing photonic nanostructures via multi-fidelity gaussian
  processes.
\newblock \emph{NeurIPS Workshop on Machine Learning for Molecules and
  Materials}, 2018.

\bibitem[Song et~al.(2022)Song, Xue, Huang, and Qian]{song2022monte}
Lei Song, Ke~Xue, Xiaobin Huang, and Chao Qian.
\newblock Monte carlo tree search based variable selection for high dimensional
  bayesian optimization.
\newblock \emph{arXiv preprint arXiv:2210.01628}, 2022.

\bibitem[Srinivas et~al.(2009)Srinivas, Krause, Kakade, and
  Seeger]{srinivas2009gaussian}
Niranjan Srinivas, Andreas Krause, Sham~M Kakade, and Matthias Seeger.
\newblock Gaussian process optimization in the bandit setting: No regret and
  experimental design.
\newblock \emph{arXiv preprint arXiv:0912.3995}, 2009.

\bibitem[Sui et~al.(2018)Sui, Zhuang, Burdick, and Yue]{sui2018stagewise}
Yanan Sui, Vincent Zhuang, Joel Burdick, and Yisong Yue.
\newblock Stagewise safe bayesian optimization with gaussian processes.
\newblock In \emph{International conference on machine learning}, pages
  4781--4789. PMLR, 2018.

\bibitem[Tripp et~al.(2020)Tripp, Daxberger, and
  Hern{\'a}ndez-Lobato]{tripp2020sample}
Austin Tripp, Erik Daxberger, and Jos{\'e}~Miguel Hern{\'a}ndez-Lobato.
\newblock Sample-efficient optimization in the latent space of deep generative
  models via weighted retraining.
\newblock \emph{Advances in Neural Information Processing Systems},
  33:\penalty0 11259--11272, 2020.

\bibitem[Wabersich and Toussaint(2016)]{wabersich2016advancing}
Kim~Peter Wabersich and Marc Toussaint.
\newblock Advancing bayesian optimization: The mixed-global-local (mgl) kernel
  and length-scale cool down.
\newblock \emph{arXiv preprint arXiv:1612.03117}, 2016.

\bibitem[Wang et~al.(2020)Wang, Fonseca, and Tian]{wang2020learning}
Linnan Wang, Rodrigo Fonseca, and Yuandong Tian.
\newblock Learning search space partition for black-box optimization using
  monte carlo tree search.
\newblock \emph{Advances in Neural Information Processing Systems},
  33:\penalty0 19511--19522, 2020.

\bibitem[Wang and Jegelka(2017)]{wang2017max}
Zi~Wang and Stefanie Jegelka.
\newblock Max-value entropy search for efficient bayesian optimization.
\newblock In \emph{International Conference on Machine Learning}, pages
  3627--3635. PMLR, 2017.

\bibitem[Wang et~al.(2016{\natexlab{a}})Wang, Zhou, and
  Jegelka]{wang2016optimization}
Zi~Wang, Bolei Zhou, and Stefanie Jegelka.
\newblock Optimization as estimation with gaussian processes in bandit
  settings.
\newblock In \emph{Artificial Intelligence and Statistics}, pages 1022--1031.
  PMLR, 2016{\natexlab{a}}.

\bibitem[Wang et~al.(2016{\natexlab{b}})Wang, Hutter, Zoghi, Matheson, and
  de~Feitas]{wang2016bayesian}
Ziyu Wang, Frank Hutter, Masrour Zoghi, David Matheson, and Nando de~Feitas.
\newblock Bayesian optimization in a billion dimensions via random embeddings.
\newblock \emph{Journal of Artificial Intelligence Research}, 55:\penalty0
  361--387, 2016{\natexlab{b}}.

\bibitem[Wilson and Nickisch(2015)]{wilson2015kernel}
Andrew Wilson and Hannes Nickisch.
\newblock Kernel interpolation for scalable structured gaussian processes
  (kiss-gp).
\newblock In \emph{International conference on machine learning}, pages
  1775--1784. PMLR, 2015.

\bibitem[Wilson et~al.(2016)Wilson, Hu, Salakhutdinov, and
  Xing]{pmlr-v51-wilson16}
Andrew~Gordon Wilson, Zhiting Hu, Ruslan Salakhutdinov, and Eric~P. Xing.
\newblock Deep kernel learning.
\newblock volume~51 of \emph{Proceedings of Machine Learning Research}, pages
  370--378, Cadiz, Spain, 09--11 May 2016. PMLR.

\bibitem[Wistuba and Grabocka(2021)]{wistuba2021few}
Martin Wistuba and Josif Grabocka.
\newblock Few-shot bayesian optimization with deep kernel surrogates.
\newblock \emph{arXiv preprint arXiv:2101.07667}, 2021.

\bibitem[Wu et~al.(2019)Wu, Kan, Lewis, Wittmann, and Arnold]{wu2019machine}
Zachary Wu, SB~Jennifer Kan, Russell~D Lewis, Bruce~J Wittmann, and Frances~H
  Arnold.
\newblock Machine learning-assisted directed protein evolution with
  combinatorial libraries.
\newblock \emph{Proceedings of the National Academy of Sciences}, 116\penalty0
  (18):\penalty0 8852--8858, 2019.

\bibitem[Yang et~al.(2019)Yang, Wu, and Arnold]{yang2019machine}
Kevin~K Yang, Zachary Wu, and Frances~H Arnold.
\newblock Machine-learning-guided directed evolution for protein engineering.
\newblock \emph{Nature methods}, 16\penalty0 (8):\penalty0 687--694, 2019.

\bibitem[Zhang et~al.(2022)Zhang, Nord, and Chen]{zhang2022learning}
Fengxue Zhang, Brian Nord, and Yuxin Chen.
\newblock Learning representation for bayesian optimization with collision-free
  regularization.
\newblock \emph{arXiv preprint arXiv:2203.08656}, 2022.

\end{thebibliography}

\clearpage
\begin{appendix}

\section{Proofs}
\subsection{Proof of \lemref{lem:CI} and \corref{cor:CI}}
\begin{proof}
Similar to lemma 5.1 of \cite{srinivas2009gaussian}, with probability at least $1-\delta$, $\forall \instance\in \discreteSet, \forall t\geq 1, \forall f \in \{f_g, \hat{f}\}$,
$$\vert f(\instance) - \mu_{f, t-1}(\instance)\vert \leq \beta_{t}^{1/2}\sigma_{f, t-1}(\instance)$$

Note that we also take the union bound on $ f \in \{f_g, \hat{f}\}$.

Then  $\forall t\geq 1, \forall f \in \{f_g, \hat{f}\}$, 
$$P\paren{f^* \leq \UCBit_{f, t}(\instance^*) \leq \UCBit_{f, t, \max}} \geq 1-\delta$$
According to \eqref{eq:acqLCB+UCB}, $\forall t\geq 1$
$$P\paren{f^* \leq \max_{\instance \in \searchSpace} \acqUCB_t(x)} \geq 1-\delta$$

Symmetrically, $ \forall \instance\in \discreteSet, \forall t\geq 1, \forall f \in \{f_g, \hat{f}\}$,
$$P\paren{f^* \geq f(\instance) \geq \LCB_{f, t}(\instance)} \geq 1-\delta$$
Then $ \forall t\geq 1,$
$$P\paren{\UCBit_{f_g, t}(\instance) \geq f^* \geq \LCB_{f_g, t, \max}} \geq 1-\delta$$
according to the definition of $\roi$, $P\paren{\instance^* \in \roi_t} \geq 1-\delta$.

Also, according to \eqref{eq:acqLCB+UCB}, $\forall t\geq 1$
$$P\paren{f^* \geq \max_{\instance \in \searchSpace} \acqLCB_t(x)} \geq 1-\delta$$
\end{proof}

\subsection{Proof of \propref{prop:regret}}

The following two lemmas shows that the width of the interval is bounded by the maximum of \acq.

\begin{lem} \label{lem:acqCIBound}
Under the same conditions assumed in \lemref{lem:CI} except for $\beta_t=2\log(2\vert \discreteSet \cap \roi \vert \pi_t/ \delta)$, with acquisition function $\acq(\instance) = \vert \acqCI_{t}(\instance) \vert$, $\forall t\geq 1, \forall f \in \{f_g, \hat{f}\}$, let $\instance'' = \argmax_{\instance \in \discreteSet \cap \roi}\vert \acqCI_{t}(\instance) \vert$ we have $\max_{\instance \in \discreteSet \cap \roi}\acqUCB_{t}(\instance) - \max_{\instance \in \discreteSet \cap \roi}\acqLCB_{t}(\instance) \leq \rho_\CI\beta^{1/2}_{t}\sigma_{f,t-1}(\instance)$. Here $\rho_\CI \leq \rho_\UCBit \leq 2$.
\end{lem}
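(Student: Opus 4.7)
The plan is to control the two-point envelope
$\max_{\instance \in \discreteROI}\acqUCB_t(\instance) - \max_{\instance \in \discreteROI}\acqLCB_t(\instance)$
by the single-point width $|\acqCI_t(\instance'')|$ at the acquisition-maximizing query $\instance''$, and then upper-bound that pointwise width by $2\beta_t^{1/2}\sigma_{f,t-1}(\instance'')$ for each $f \in \{\globalf, \roif\}$ using the infimum/supremum definition of $\acqUCB$ and $\acqLCB$ in \eqref{eq:acqLCB+UCB}. This yields $\rho_\CI \le 2$; the sandwich $\rho_\CI \le \rho_\UCBit$ will follow from the analogous bound for the UCB-of-intersection variant.

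Concretely, first I would let $\instance_1 \in \argmax_{\instance \in \discreteROI}\acqUCB_t(\instance)$ and $\instance_2 \in \argmax_{\instance \in \discreteROI}\acqLCB_t(\instance)$. Since $\acqLCB_t(\instance_2) \ge \acqLCB_t(\instance_1)$ by definition of $\instance_2$, the envelope satisfies
\begin{align*}
\max_{\instance \in \discreteROI}\acqUCB_t(\instance) - \max_{\instance \in \discreteROI}\acqLCB_t(\instance)
&= \acqUCB_t(\instance_1) - \acqLCB_t(\instance_2) \\
&\le \acqUCB_t(\instance_1) - \acqLCB_t(\instance_1) \\
&\le \max_{\instance \in \discreteROI}\bigl(\acqUCB_t(\instance) - \acqLCB_t(\instance)\bigr) \\
&= |\acqCI_t(\instance'')|,
\end{align*}
where the last equality uses the defining property of $\instance''$.

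Next I would control $|\acqCI_t(\instance'')|$ pointwise. By \eqref{eq:acqLCB+UCB}, for every $f \in \{\globalf, \roif\}$ and every $i \le t$ we have $\acqUCB_t(\instance'') \le \UCBit_{f,i}(\instance'')$ and $\acqLCB_t(\instance'') \ge \LCB_{f,i}(\instance'')$. Specializing to $i = t$,
$$|\acqCI_t(\instance'')| \;\le\; \UCBit_{f,t}(\instance'') - \LCB_{f,t}(\instance'') \;=\; 2\beta_t^{1/2}\sigma_{f,t-1}(\instance''),$$
which delivers $\rho_\CI \le 2$. For the comparison $\rho_\CI \le \rho_\UCBit$, I would observe that any analogous width bound derived for the UCB-of-intersection acquisition majorizes the one above, since replacing the lower endpoint $\acqLCB_t$ by the weaker $\mu - \beta_t^{1/2}\sigma$ from a single GP can only enlarge the resulting pointwise width.

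The main obstacle here is really bookkeeping rather than analysis: one has to carefully distinguish the three maximizers $\instance_1$, $\instance_2$, $\instance''$ and verify that each inequality in the chain invokes the correct defining property. The conceptually nontrivial step is the first chain, which shows that the \emph{envelope} width (a quantity involving two different argmaxes) is controlled by the pointwise CI width at the single point selected by the acquisition rule---this is precisely what makes $\acq(\instance) = |\acqCI_t(\instance)|$ the ``right'' surrogate objective and lets \propref{prop:regret} go through by reduction to a standard GP-UCB style information-gain argument.
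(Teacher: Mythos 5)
Your proposal is correct and follows essentially the same route as the paper's proof: bound the envelope $\max\acqUCB_t - \max\acqLCB_t$ by the pointwise width $\acqUCB_t(\instance_1)-\acqLCB_t(\instance_1)$ at the UCB-maximizer, pass to $\vert\acqCI_t(\instance'')\vert$, and then bound that by $2\beta_t^{1/2}\sigma_{f,t-1}(\instance'')$ via \eqref{eq:acqLCB+UCB}. If anything, your ordering of the middle inequalities is cleaner than the paper's chain (which writes the comparison between $2\beta_t^{1/2}\sigma_{f,t-1}(\instance')$ and the maximal intersection width in a direction that does not hold pointwise), though like the paper you leave the claim $\rho_\CI\le\rho_\UCBit$ essentially unargued since $\rho_\UCBit$ is never formally defined.
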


\begin{proof}
$\forall t\geq 1, \forall f \in \{f_g, \hat{f}\}$
\begin{align*}
    \max_{\instance \in \discreteSet \cap \roi}\acqUCB_{t}(\instance) - \max_{\instance \in \discreteSet \cap \roi}\acqLCB_{t}(\instance)
    & \leq \acqUCB_{t}(\instance')- \acqLCB_{t}(\instance')\\
    & \leq 2\beta_{t}^{1/2}\sigma_{f, t-1}(\instance')\\
    & \leq \argmax_{\instance \in \discreteSet \cap \roi}\vert \acqCI_{t}(\instance) \vert\\
    & = 2\beta_{t}^{1/2}\sigma_{f, t-1}(\instance'')
\end{align*}
\end{proof}

The followings finish the proof of \propref{prop:regret}.\\
\begin{proof}
By lemma 5.4 of \citet{srinivas2009gaussian}, with $\beta_t=2\log(2\vert \discreteSet \cap \roi \vert \pi_t/ \delta)$, $\forall f \in \{f_g, \hat{f}\}, \sum_{t=1}^{T} (2\beta_{t}^{1/2}\sigma_{f, t-1}(\instance_t))^2 \leq C_1\beta_T\maxInfo_{f, T}$. Taking the union bound of \lemref{lem:CI} and \corref{cor:CI2}, with probability at least $ 1-2\delta$, $\forall f \in \{f_g, \hat{f}\}$,
\begin{align*}
     \sum_{t=1}^{T} \paren{\max_{\instance \in \discreteSet \cap \roi}\acqUCB_{t}(\instance) - \max_{\instance \in \discreteSet \cap \roi}\acqLCB_{t}(\instance)}^2
    &\leq \sum_{t=1}^{T}(\rho_\acq \beta_{t}^{1/2}\sigma_{f, t-1}(\instance_t))^2\\
    &\leq \rho_\acq^2C_1\beta_T\maxInfo_{f, T}/4
\end{align*}
According to \eqref{eq:acqLCB+UCB}, $\max_{\instance \in \discreteSet \cap \roi}\acqUCB_{t}(\instance) - \max_{\instance \in \discreteSet \cap \roi}\acqLCB_{t}(\instance)$ is monotonically decreasing. By Cauchy-Schwaz, with probability at least $ 1-2\delta$, $\forall f \in \{f_g, \hat{f}\}$, \begin{align*}
    \rho_\acq^2C_1\beta_T\maxInfo_{f, T}/4
    &\geq \sum_{t=1}^{T} \paren{\max_{\instance \in \discreteSet \cap \roi}\acqUCB_{t}(\instance) - \max_{\instance \in \discreteSet \cap \roi}\acqLCB_{t}(\instance)}^2\\
    &\geq \frac{1}{T}(\sum_{t=1}^{T} \max_{\instance \in \discreteSet \cap \roi}\acqUCB_{t}(\instance) - \max_{\instance \in \discreteSet \cap \roi}\acqLCB_{t}(\instance))^2\\
    &\geq T \paren{\max_{\instance \in \discreteSet \cap \roi}\acqUCB_{T}(\instance) - \max_{\instance \in \discreteSet \cap \roi}\acqLCB_{T}(\instance)}^2
\end{align*}
Assume with probability at least $1-2\delta$,
\begin{align*}
    \paren{\max_{\instance \in \discreteSet \cap \roi}\acqUCB_{T}(\instance) - \max_{\instance \in \discreteSet \cap \roi}\acqLCB_{T}(\instance)}^2
    \leq \rho_\acq^2C_1\beta_T\maxInfoBALLET_{T}/4T 
    \leq \epsilon^2
\end{align*}
Hence, with the smallest $T$ satisfying $T \geq \frac{\rho^2_\acq \beta_T \maxInfoBALLET_T C_1}{4 \epsilon^2}$, $\Pr{\max_{\instance \in \discreteSet \cap \roi}\acqUCB_{T}(\instance) - \max_{\instance \in \discreteSet \cap \roi}\acqLCB_{T}(\instance)\leq \epsilon} \geq 1-2\delta$.
\end{proof}

\section{Discussions}
\paragraph{Smoothness improvement on ROI}

In near-optimal areas, the smoothness of the objective should be no worse than the smoothness in the larger (global) area. \cite{srinivas2009gaussian} discussed the role of smoothness in reducing $\maxInfo$. As indicated by \propref{prop:regret}, the benefits to optimization of a smoother kernel learned on ROI instead of the kernel learned on the globe could be reflected in the reduced $\maxInfoBALLET$ in the regret bound compared to $\maxInfo_{f_g}$ without the filtering of BALLET. 

\section{Supplemental Experimental Results}

In this section, we include an extended empirical study of \algname, compared against a broader collection of baseline algorithms with varying hyperparameters. Specifically, we show the results for the following algorithms:
\begin{itemize}\denselist
    \item \textsl{BALLET-ICI-RBF}: \interCI with RBF (squared-exponential) base kernel $k(\instance, \instance') = \exp\left(- \frac{d(\instance, \instance')^2}{2l^2} \right)$.
    \item \textsl{BALLET-ICI-Lin}: \interCI with linear base kernel $k(\instance, \instance') = \sigma_0 ^ 2 + \instance \cdot \instance'$ (with prior $N(0, \sigma_0^2)$ on the bias).
    \item \textsl{DKBO-AE-RBF}. \DKBO with RBF base kernel \citep{zhang2022learning}.
    \item \textsl{DKBO-AE-Lin}. \DKBO with linear base kernel.
    \item \textsl{\LAMCTS}. The Latent Action Monte Carlo Tree Search algorithm (LA-MCTS) of \citet{wang2020learning}.
    \item \textsl{TuRBO-m}. The Trust region Bayesian optimization (\turbo) algorithm of \citet{eriksson2019scalable}, where $m$ specifies the variant of \turbo that maintains $m$ local models in parallel.
\end{itemize}

\begin{table}[h]
\centering
\scalebox{.82}{
    \begin{tabular}{l l l l l l l}
    \toprule
        ~ & \textbf{1-D toy} & \textbf{HDBO} & \textbf{Nanophotonics} & \textbf{WaterConverter} & \textbf{GB1} & \textbf{Rosetta} \\ 
        \midrule
        ~ & $T=40$ & $T=40$ & $T=90$ & $T=90$ & $T=90$ & $T=90$ \\ \hline
        \textsc{\algname-ICI-RBF} & \bm{$0.03\pm 0.01$} & \bm{$85.90\pm 7.29$} & $76.65\pm 9.55$ & $33664.62\pm 0.00$ & \bm{$4.81\pm 0.15$} & \bm{$3.86\pm 0.49$} \\ \hline
        \textsc{\algname-ICI-Lin} & $0.10\pm 0.03$ & $110.81\pm 7.93$ & {$34.49\pm 4.39$} & \bm{$20084.66\pm 2928.84$} & $6.33\pm 0.28$ & $5.11\pm 0.18$ \\ \hline
        \textsc{\DKBO-RBF}  & $0.05\pm 0.02$ & $90.75 \pm 16.01$ & $89.49\pm 3.44$ & $28591.49\pm 2560.23$ & $5.02\pm 0.41$ & $4.89\pm 0.16$ \\ \hline
        \textsc{\DKBO-Lin} & $0.07\pm 0.01$ & $92.84\pm 6.22$ & $82.94\pm 4.50$ & $33664.63 \pm 0$ & $6.44\pm 0.19$ & $4.12\pm 0.46$ \\ \hline
        \LAMCTS & $0.10\pm 0.04$ & $95.47\pm 4.84$ & \bm{$30.79\pm 10.28$} & $26814.43\pm 1593.76$ & $5.59\pm 0.40$ & $5.09\pm 0.32$ \\ \hline
        \textsc{\turbo-1} & $0.31\pm 0.00$ & $136.80\pm 0.00$ & $96.58\pm 0.00$ & $33664.69\pm 0.00$ & $5.34\pm 0.52$ & $6.67\pm 0.00$ \\ \hline
        \textsc{\turbo-2} & $0.07\pm 0.06$ & $105.51\pm 5.47$ & $50.60\pm 15.46$ & $28450.65\pm 1691.96$ & $4.95\pm 0.45$ & $5.75\pm 0.32$ \\ \hline
        \textsc{\turbo-4} & $0.04\pm 0.03$ & $93.74\pm 10.25$ & $63.60\pm 3.48$ & $32800.93\pm 3148.98$ & $5.72\pm 0.72$ & $5.46\pm 0.22$\\ 
        \bottomrule
    \end{tabular}
    }
    \caption{
    Simple regret (Mean $\pm$ SE) at the $T^{\text{th}}$ iteration on the 6 datasets described in \secref{sec:exp}. Here, $T$ aligns with the optimization horizon reported in \figref{fig:exp:simpleRegret} for each dataset. The top results are highlighted in bold.}\label{tab:res:parameter}
    \vspace{-5mm}
\end{table}

As shown in \tabref{tab:res:parameter}, \interCI (with different choices of base kernels) consistently outperforms other baselines on all datasets but Nanophotonics. On Nanophotonics, while there is a small gap between \interCI and \LAMCTS, \LAMCTS is relatively unstable with a larger variance (SE). This is consistent with the results reported in \figref{fig:exp:simpleRegret}.

\paragraph{Hyperpameter choice}

We further provide results on \algname's performance with varying $\beta$ when filtering. \Figref{fig:beta-vs-regret} shows the simple regret of \interCI on the \textit{Nanophotonics} dataset. We observe that---although our regret bounds in \secref{sec:interci} rely on specific choices of $\beta_t^{1/2}$ for filtering -- the empirical results are robust within a range of small values. Also, using the $\beta_T^{1/2}=6.2$ as the analytic results in \propref{prop:regret} failed to match the performance of the fixed $\beta_t^{1/2}\leq 1$, showing its over-conservative problem.

\begin{figure}[th]
  \begin{center}
      \includegraphics[trim={0pt 10pt 0pt 0pt}, width=\includegraphics[trim={0pt 10pt 0pt 0pt}, width=.45\textwidth]{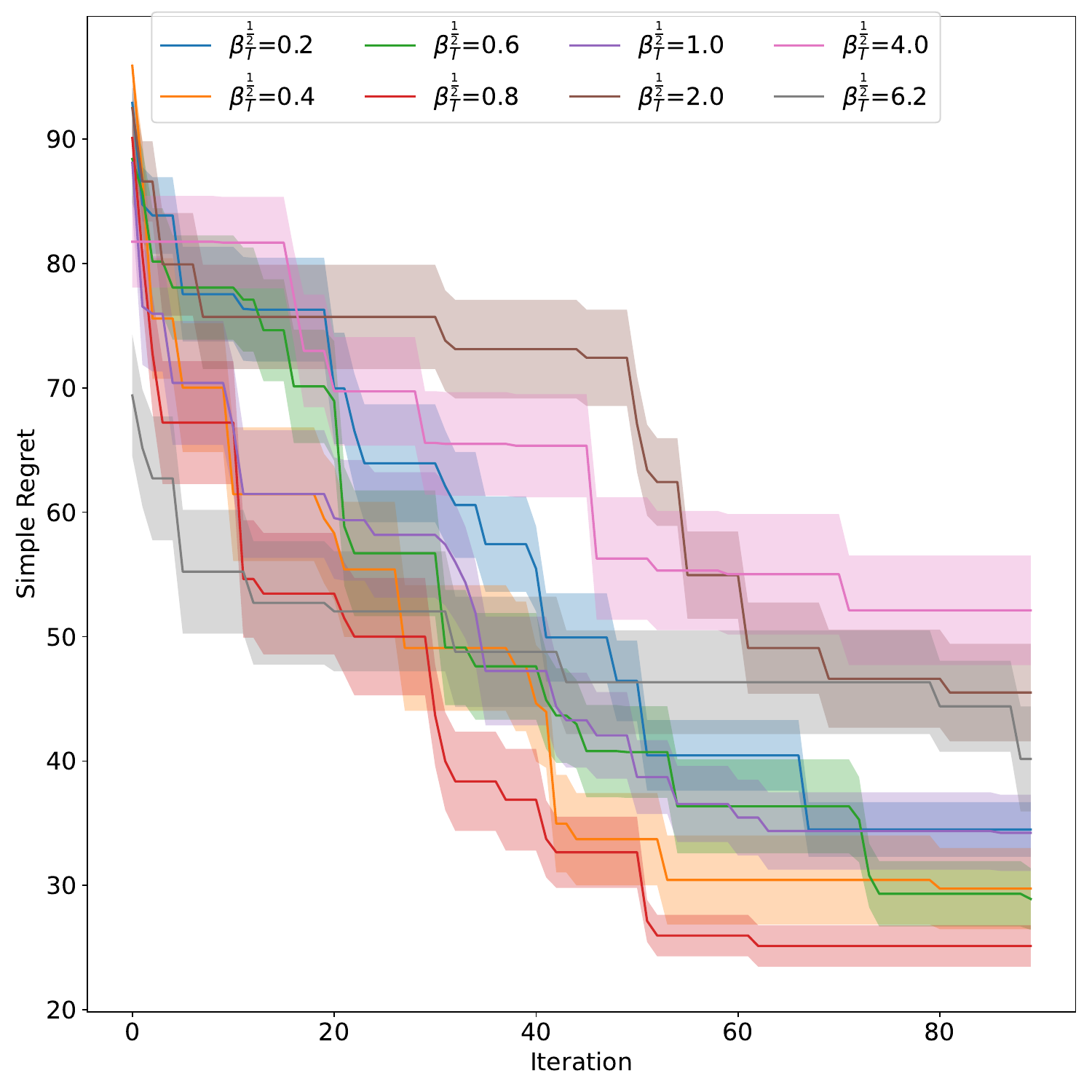}
      \caption{Effect of scaling parameter $\beta$ on the \textit{Nanophotonics} dataset. For $\beta_T^{1/2} \leq 4$, the values are fixed through the optimization, while $\beta_T^{1/2} =6.2$ corresponds to the results of varying $\beta_t^{1/2}$ as in \propref{prop:regret}.}
      \label{fig:beta-vs-regret}
      \end{center}
\end{figure}

\section{Additional Results}
\subsection{TRUVAR Results}
We do not include TRUVAR by \citep{bogunovic2016truncated} in the main paper for the following reasons. (1) TRUVAR is \textbf{not a partition-based BO method} that aims at resolving the heteroscedasticity in BO by learning local models;
(2) It is prohibitive to run for large candidate sets as TRUVAR's acquisition function requires estimating the posterior variance reduction for all the remaining candidates. 
We observe on the 1D-toy dataset the simple regret is $0.121\pm 0.033$ by TRUVAR v.s. $0.0031 \pm 0.011$ by BALLET-ICI.

\vspace{-3mm}
\subsection{Exact-GP results}
\todo{add new graphs}
We Compare Exact-GP results on 1-D Toy, Nanophotonics, and Water converter configuration datasets with DKBO-AE as an ablation study of deep kernel learning. The choice of kernels and hypereparameters are identical to the deep kernel discussed in \secref{sec:exp} except for removing the latent space mapping and kernel interpolation. As is shown in \figref{fig:exp_append_RCI}, Exact-GP is consistently outperformed by DKBO-AE and \interCI.

\subsection{RCI results}
We compare DKBO-AE-RCI directly with DKBO-AE as the direct ablation study of the proposed acquisition function. The choice of kernels and hypereparameters are identical to the deep kernel discussed in \secref{sec:exp}. The acquisition function 
$ \UCBit_{\globalf, t}(\instance) - \LCB_{\globalf, t}$, which is similar to \eqref{eq:acqROI-ci}, is maximized over $\searchSpace$ instead of $\roi$. 

\begin{figure*}[h] 
        \centering
        \begin{subfigure}[th]{1\linewidth}
        {
          \centering
          \hspace{12mm}
          \includegraphics[trim={0pt 0pt 0pt 0pt}, width=.85\textwidth]{./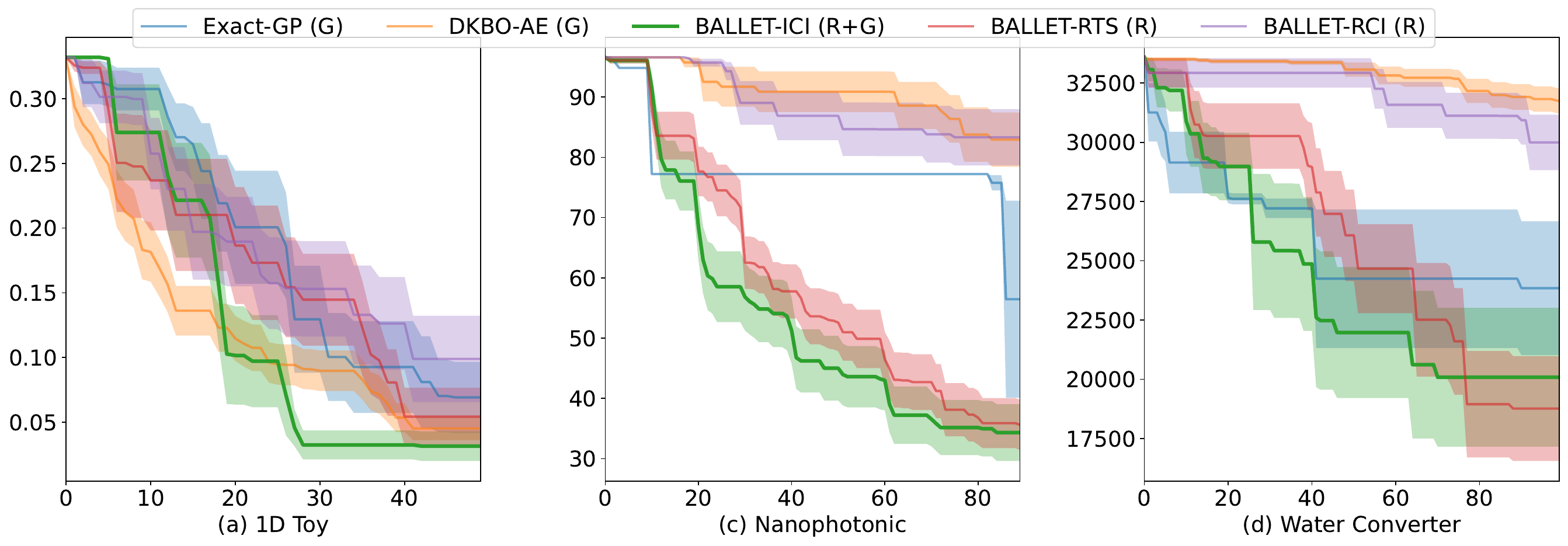}
        }
        \end{subfigure}
        \begin{subfigure}[bh]{1\linewidth}
        {
          \centering
          \hspace{10mm}
          \includegraphics[trim={0pt 0pt 0pt 0pt}, width=.865\textwidth]{./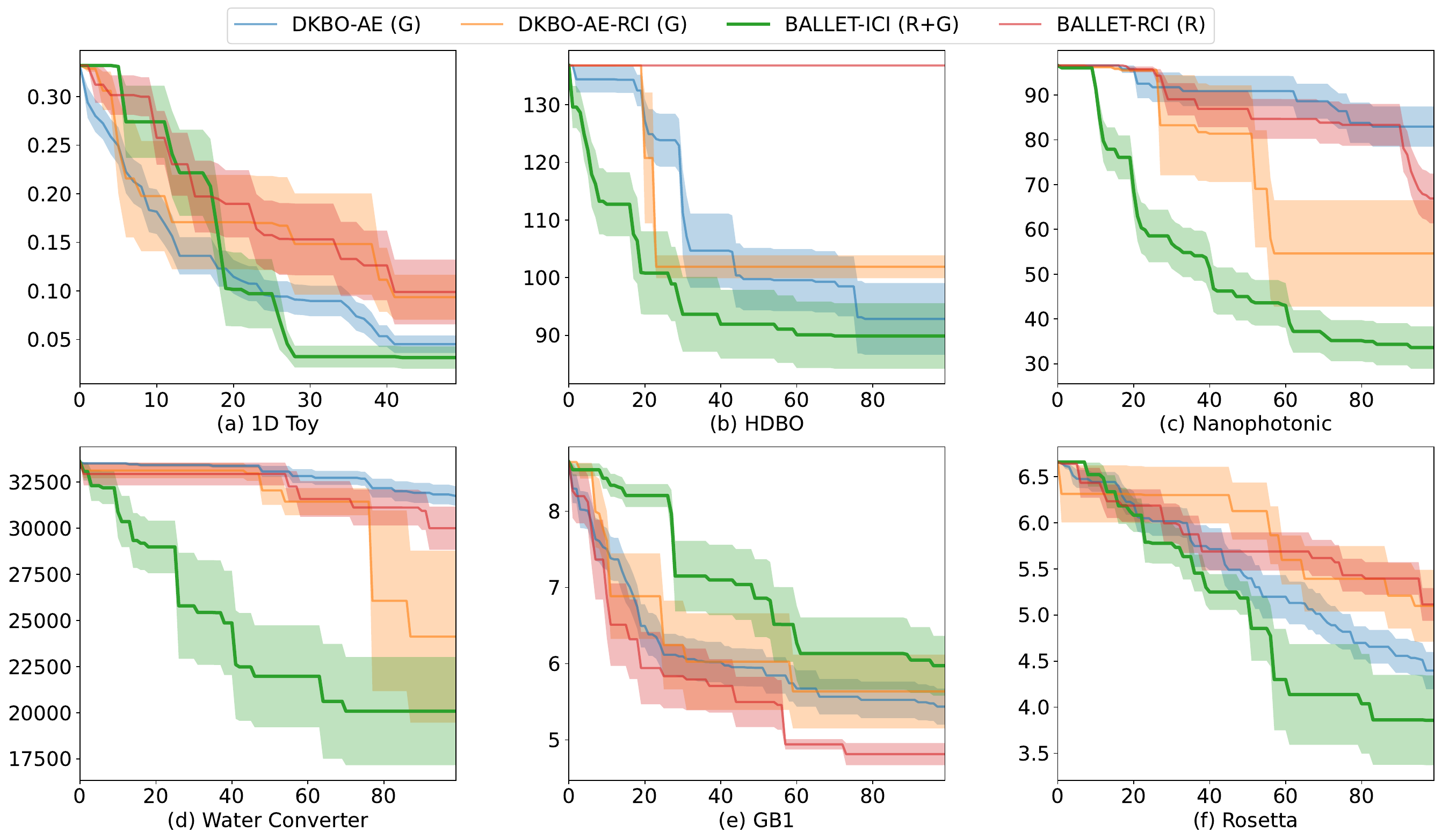}
        }
        \end{subfigure}
  \caption{\small Simulation results on each task are shown here. The error bar demonstrates the standard error. The x-axis denotes the number of iterations, and the y-axis denotes the simple regret. The simple regrets for the 10 initial randomly picked warm-up datasets are clipped. (G), (R), and (R+G) means the global model only, the ROI model only, and the ROI model combined with the global model correspondingly.
  }    \label{fig:exp_append_RCI}
\end{figure*}

As is shown \figref{fig:exp_append_RCI}, \interCI outperforms the baselines except on GB1, indicating the advantage of leveraging both global and local information together. \roiCI performs the best and DKBO-AE-RCI outperforms \interCI on GB1. This shows the benefits of identifying the ROI and optimizes on it, and the harm a potential discrepancy between the global model and the ROI model could be to the optimization.

\subsection{Larger Horizon for Protein Datasets}
\rebuttal{We provide additional large-scale empirical results on the Rosetta-86D and GB1-119D pre-collected protein design datasets. The results are collected from 10 independent 300-iteration trials for each experiment. 
\begin{table}[h]
\centering
\begin{tabular}{lcccc}
\hline
& DK-AE & BALLET-ROI & BALLET-Intersection & Avg Run Time (sec) \\
\hline
TS       & 4.58$\pm$0.16 (1.15e-03) & \textbf{3.73$\pm$0.17} & nan & 1010.7 \\
UCB      & 4.78$\pm$0.22 (1.05e-03) & 4.60$\pm$0.33 (1.89e-02) & 4.44$\pm$0.31 (3.50e-02) & 723.2 \\
EI       & 5.02$\pm$0.34 (2.11e-03) & 4.48$\pm$0.13 (1.45e-03) & nan & 789.2 \\
CI       & 4.61$\pm$0.16 (9.98e-04) & 4.72$\pm$0.35 (1.20e-02) & 4.21$\pm$0.40 (1.51e-01) & 667.7 \\
LA-MCTS  & 5.60$\pm$0.32 (5.84e-05) & nan & nan & 2521.7 \\
TuRBO-DK & 6.30$\pm$0.50 (9.64e-05) & nan & nan & 1205.4 \\
\hline
\end{tabular}
\caption{Performance comparison on GB1.}\label{tab:gb1_performance}
\end{table}

\begin{table}[h]
\centering
\begin{tabular}{lcccc}
\hline
& DK-AE & BALLET-ROI & BALLET-Intersection & Avg Run Time (sec) \\
\hline
TS       & 3.95$\pm$0.18 (3.84e-04) & 3.51$\pm$0.40 (1.97e-02) & nan & 126.1 \\
UCB      & 3.55$\pm$0.30 (8.25e-03) & 2.43$\pm$0.42 (3.99e-01) & 2.43$\pm$0.42 (3.99e-01) & 93.9 \\
EI       & 3.38$\pm$0.38 (2.90e-02) & 3.01$\pm$0.39 (1.01e-01) & nan & 91.1 \\
CI       & 3.56$\pm$0.38 (1.43e-02) & 4.31$\pm$0.35 (5.38e-04) & \textbf{2.28$\pm$0.35} & 82.5 \\
LA-MCTS  & 3.02$\pm$0.46 (1.19e-01) & nan & nan & 1028.1 \\
TuRBO-DK & 3.80$\pm$0.32 (3.32e-03) & nan & nan & 433.1 \\
\hline
\end{tabular}
\caption{Performance comparison on Rosetta.}\label{tab:rosetta_performance}
\end{table}

We summarize the simple regrets of these two additional experiments in \tabref{tab:gb1_performance} and \tabref{tab:rosetta_performance}. We highlight the best results and report the p-value against all other methods in the parenthes. The results are shown in three columns: (1) using deep kernel only, (2) applying the acquisition function within the identified ROI, and (3) using the intersection of confidence intervals from both global and ROI models as the acquisition function. The average running times for a single trial of the right-most (fastest within the row) results are provided for each row.

\begin{figure*}[h!] 
        \centering
        \begin{subfigure}[th]{0.95\linewidth}
        {
          \includegraphics[trim={0pt 0pt 0pt 0pt}, width=.95\textwidth]{./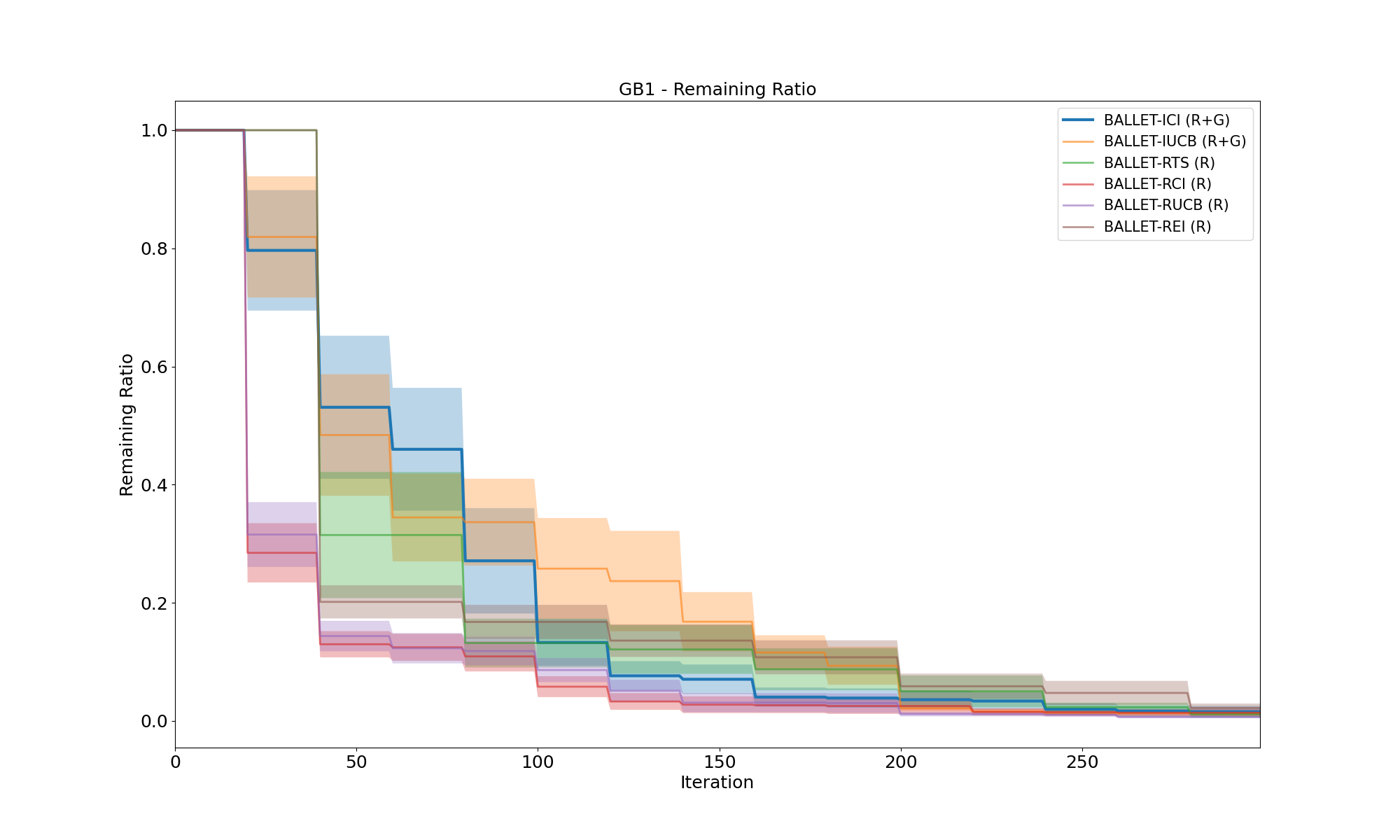}
        }
        \end{subfigure}
        \begin{subfigure}[bh]{.95\linewidth}
        {
          \includegraphics[trim={0pt 0pt 0pt 0pt}, width=.95\textwidth]{./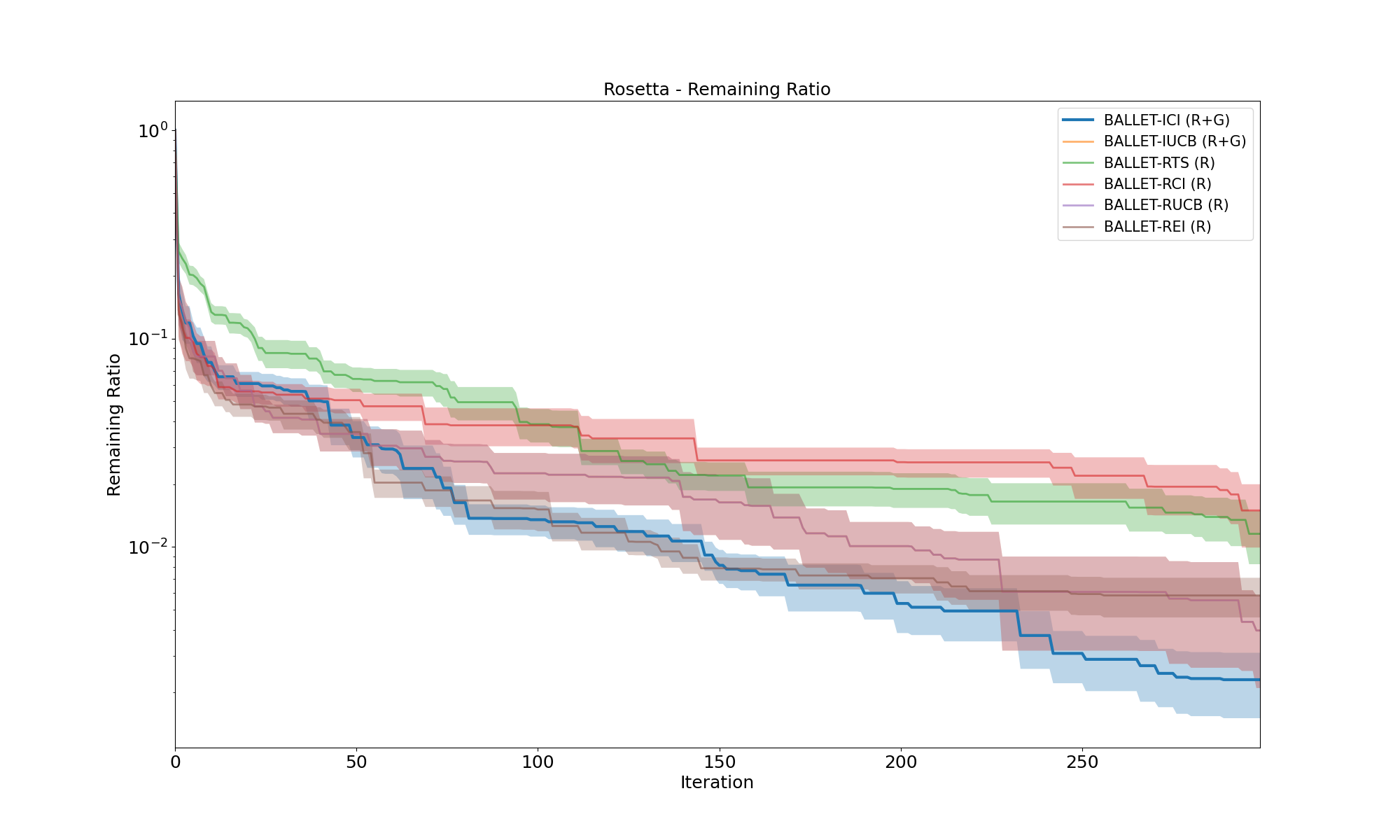}
        }
        \end{subfigure}
    \vspace{-2mm}
  \caption{\small Simulation results on each task are shown here. The error bar demonstrates the standard error. The x-axis denotes the number of iterations, and the y-axis denotes the filtering ratio. The ratio for the 10 initial randomly picked warm-up datasets are clipped. (G), (R), and (R+G) means the global model only, the ROI model only, and the ROI model combined with the global model correspondingly.
  }   \label{fig: additional_ratio}
\end{figure*}

The results demonstrate that using the \algname framework with each acquisition function leads to improvements over the DKBO-AE version. Additionally, comparing BALLET-Intersection and BALLET-ROI reveals the benefits of using the intersection of confidence intervals. The filtering ratio curve shown in \figref{fig: additional_ratio}, combined with these results, suggests that the proposed \interCI method accelerates the optimization process by shrinking the ROI more rapidly.}

\section{Discussions}

\subsection{Computational Cost}
In deep kernel learning, which is shown to bear strong empirical performance in regression and optimization 
(e.g.\citep{pmlr-v51-wilson16, wistuba2021few}), 
the learning cost is $\mathcal{O}(n)$ for $n$ training points, and the prediction cost is
$\mathcal{O}(1)$ 
per test point and is more efficient than the exact GP in terms of computational cost. Compared to the \textbf{significant experiment cost} in the real-world application BALLET is proposed for (e.g., cosmological design, protein study), the computational cost is negligible. 
Meanwhile, the runtime of other partition-based algorithms depends on the 
\textbf{hyperparameters} of the partitioning heuristics,
e.g., K-means iterations in LA-MCTS, the number and size of trust regions in TuRBO.

\subsection{Limitation and Future Work}
We summarize the following limitations throughout the paper.
\begin{itemize}
    \item The analysis only applies to given discretization, while sampling and related work focus on the issue;
    \item It Doesn’t help to learn an ROI GP when the objective has global uniformity. The global kernel itself forms a good surrogate GP.
    \item The analysis should be able to extend to more acquisition functions.
    \item Lack of analysis on top of the (deep) kernel learning. Though different from applying an exact GP through the optimization process, deep kernel learning has shown strong performance in regression and optimization tasks. The gap between DK-based BO and exact GP-based BO remains to be filled.
\end{itemize}

\end{appendix}

\end{document}


%

%

\onecolumn
\aistatstitle{Instructions for Paper Submissions to AISTATS 2023: \\
Supplementary Materials}

\section{FORMATTING INSTRUCTIONS}

To prepare a supplementary pdf file, we ask the authors to use \texttt{aistats2023.sty} as a style file and to follow the same formatting instructions as in the main paper.
The only difference is that the supplementary material must be in a \emph{single-column} format.
You can use \texttt{supplement.tex} in our starter pack as a starting point, or append the supplementary content to the main paper and split the final PDF into two separate files.

Note that reviewers are under no obligation to examine your supplementary material.

\section{MISSING PROOFS}

The supplementary materials may contain detailed proofs of the results that are missing in the main paper.

\subsection{Proof of Lemma 3}

\textit{In this section, we present the detailed proof of Lemma 3 and then [ ... ]}

\section{ADDITIONAL EXPERIMENTS}

If you have additional experimental results, you may include them in the supplementary materials.

\subsection{The Effect of Regularization Parameter}

\textit{Our algorithm depends on the regularization parameter $\lambda$. Figure 1 below illustrates the effect of this parameter on the performance of our algorithm. As we can see, [ ... ]}

\vfill